\documentclass{article}

\PassOptionsToPackage{numbers,compress}{natbib}

\usepackage[preprint]{neurips_2026}

\usepackage[utf8]{inputenc} % allow utf-8 input
\usepackage[T1]{fontenc}    % use 8-bit T1 fonts
\usepackage{hyperref}       % hyperlinks
\usepackage{url}            % simple URL typesetting
\usepackage{booktabs}       % professional-quality tables
\usepackage{amsfonts}       % blackboard math symbols
\usepackage{nicefrac}       % compact symbols for 1/2, etc.
\usepackage{microtype}      % microtypography
\usepackage{xcolor}         % colors

\usepackage{graphicx}
\usepackage{amsmath}
\usepackage{amssymb}
\usepackage{mathtools}
\usepackage{amsthm}
\usepackage{multirow}
\usepackage{makecell}

\newcommand{\valstd}[2]{%
    $\text{#1}_{\,\pm \,\text{#2}}$%
}

\usepackage{tikz}
\usepackage{pgfplots}
\pgfplotsset{compat=1.17}
\usepgfplotslibrary{groupplots}

\usepackage{svg}
\usepackage{placeins}

\theoremstyle{plain}
\newtheorem{theorem}{Theorem}[section]
\newtheorem{proposition}[theorem]{Proposition}

\theoremstyle{definition}

\newtheorem{assumption}[theorem]{Assumption}
\theoremstyle{remark}

\usepackage{enumitem}
\usepackage{wrapfig}

\setlist[itemize]{noitemsep, topsep=0pt}
\setlist[enumerate]{noitemsep, topsep=0pt}

\title{ASPECT: Node-Level Adaptive Spectral Fusion\\for Graph Contrastive Learning}

\author{%
  Zhuolong Li
  \quad Boxue Yang
  \quad Haopeng Chen\thanks{Corresponding author: Haopeng Chen
  (\texttt{chen-hp@sjtu.edu.cn}).} \\
  School of Computer Science, Shanghai Jiao Tong University\\
  \texttt{li.zhuolong@sjtu.edu.cn}
  \quad
  \texttt{yangboxue@sjtu.edu.cn}
  \quad
  \texttt{chen-hp@sjtu.edu.cn}
}

\begin{document}

\maketitle

\begin{abstract}
Spectral graph contrastive learning often constructs low- and high-frequency views to capture complementary graph signals, but these views are commonly combined by graph-level or node-agnostic fusion rules. We show that graph-level fusion can incur irreducible regret on mixed graphs with separated node-wise spectral preferences. Motivated by this result, we propose ASPECT, a spectral graph contrastive learning method that adaptively fuses low- and high-frequency views at the node level. ASPECT learns a node-wise spectral policy and regularizes it using channel-wise contrastive evidence, enabling different nodes to use different spectral mixtures. We further introduce ASPECT-S, an optional stability-aware extension that uses generated graph-structure and feature perturbations to obtain empirical channel-wise sensitivity estimates, together with a Rayleigh-based spectral search bias for producing informative perturbations. Experiments on homophilic and heterophilic benchmarks show that ASPECT improves representation quality over competitive spectral and graph contrastive baselines, while ASPECT-S further improves performance under joint graph-structure and feature perturbations.
\end{abstract}
\section{Introduction}

Graph contrastive learning (GCL) has become a standard paradigm for
self-supervised representation learning on graphs
\citep{dgi,graphcl,grace}. A persistent challenge is that real-world graphs
often contain heterogeneous local structures: some nodes are better represented
by smooth, low-frequency information, whereas others benefit from non-smooth,
high-frequency components that capture heterophilic boundaries or local
structural discontinuities
\citep{zhu2020beyond,lim2021large,surveyonhete}. Recent spectral GCL methods
address this challenge by constructing low- and high-frequency views
\citep{hlcl,polygcl,s3gcl,zou2025loha,huang2024dpgcl}. However, once these
views are obtained, a further question remains: how should they be fused for
nodes whose spectral preferences may vary across the graph? This fusion step is
often treated as a graph-level or node-agnostic design choice, but such a
strategy can be restrictive when different nodes require different spectral
mixtures.

We formalize this limitation by studying spectral fusion as a policy selection
problem. On mixed graphs with heterogeneous and sufficiently separated
node-wise spectral preferences, we show that even the best node-agnostic global fusion rule can incur
irreducible regret relative to a node-wise oracle
(Theorem~\ref{thm:static_regret}). This result shows that spectral fusion is
a consequential design choice: under separated node-wise spectral
preferences, a single graph-level mixing rule cannot be uniformly near-optimal.
The implication is not that either low- or high-frequency information should be
universally preferred, but that the preferred mixture should be allowed to vary
at the node level. This perspective is directly relevant to global-fusion
spectral GCL designs, with PolyGCL~\citep{polygcl} being a directly comparable
example, and also highlights fusion granularity as a broader design dimension
for recent spectral graph representation learning methods
\citep{huang2024dpgcl,zou2025loha}.

Motivated by this observation, we propose ASPECT, a spectral graph contrastive
learning method that adaptively fuses low- and high-frequency views at the node
level. ASPECT constructs low- and high-frequency views with learnable spectral
filters and learns a node-wise spectral policy to assign node-specific
low-/high-frequency mixtures. The policy allows different nodes to rely on
different combinations of low- and high-frequency information. To guide this
policy, ASPECT optimizes a standard contrastive objective together with a
utility-aware spectral policy regularizer built from channel-wise contrastive
evidence. This design targets the mismatch between graph-level
fusion and node-level spectral heterogeneity.

We further consider settings where robustness to local perturbations is
important. Optimizing a standard contrastive objective does not explicitly
control how sensitive the learned representation is to graph-structure or
feature perturbations. To study this issue, we derive a perturbation-aware upper
bound in which the risk under local perturbations is bounded by the standard
objective plus a sensitivity term. This motivates ASPECT-S, an optional
stability-aware extension of ASPECT. ASPECT-S uses generated graph-structure
and feature perturbations to provide empirical channel-wise sensitivity
estimates, while a Rayleigh-based spectral search bias encourages the
perturbation search to produce informative spectral profile shifts. This branch
provides an additional stability-aware training signal when robustness to local
perturbations is desired.

Our contributions are summarized as follows.
\textbf{(1) Theory:} We establish a regret lower bound showing that graph-level
spectral fusion is structurally limited on mixed graphs with heterogeneous and
sufficiently separated node-wise spectral preferences.
\textbf{(2) Method:} We propose ASPECT, a spectral GCL method that adaptively
fuses low- and high-frequency views at the node level through a utility-aware
node-wise spectral policy.
\textbf{(3) Stability-aware extension:} We introduce ASPECT-S, an optional
extension that uses generated graph-structure and feature perturbations to
obtain empirical channel-wise sensitivity estimates, together with a
Rayleigh-based spectral search bias for producing informative perturbations.
\textbf{(4) Experiments:} Experiments on homophilic and heterophilic benchmarks
show that ASPECT improves clean representation quality over competitive
spectral and graph contrastive baselines, while ASPECT-S further improves
performance under joint graph-structure and feature perturbations. A detailed discussion of related work is provided in Appendix~\ref{app:related_work}.
\section{Theory: Node-wise Spectral Fusion and Stability-Aware Extension}
\label{sec:theory}

We provide a compact theoretical analysis to motivate ASPECT and its optional
stability-aware variant, ASPECT-S. The main result shows that graph-level
spectral fusion is structurally suboptimal when node-wise spectral preferences
are heterogeneous. We then give a perturbation-aware view explaining why, when
robustness to local perturbations is desired, generated perturbations can serve as stability-aware probes for estimating channel-wise sensitivity. Finally, we connect the resulting channel-wise utility--sensitivity tradeoff to the auxiliary policy regularizer used to train the node-wise spectral policy. Complete assumptions and proofs are deferred to Appendix~\ref{app:theory}.

\subsection{Problem Setup}
\label{subsec:theory_setup}

Let $z_{L,v},z_{H,v}\in\mathbb{R}^d$ denote the low- and high-frequency
embeddings of node $v$. For a fusion coefficient $m\in[0,1]$, define
\begin{equation}
z_v(m) \triangleq m z_{L,v} + (1-m)z_{H,v},
\label{eq:nodewise_fusion}
\end{equation}
where larger $m$ corresponds to stronger reliance on the low-frequency channel.
Let $T$ denote the randomness in the contrastive objective, such as
positive-view and negative-sample selection. The standard contrastive surrogate
risk for node $v$ is
\begin{equation}
\mathcal{E}_v(m) \triangleq
\mathbb{E}_{T}\big[\ell(z_v(m);T)\big],
\label{eq:clean_risk}
\end{equation}
where $\ell(\cdot;T)$ is a contrastive surrogate loss.

For the stability analysis, let $\mathcal{Q}_v$ be a set of allowable local
perturbations around node $v$. For $\delta\in\mathcal{Q}_v$, let
$z_v^\delta(m)$ denote the fused embedding under perturbation $\delta$. Define
the perturbation-aware risk and local sensitivity as
\begin{equation}
R_v^{\mathcal{Q}}(m)
\triangleq
\sup_{\delta\in\mathcal{Q}_v}
\mathbb{E}_{T}\big[\ell(z_v^\delta(m);T)\big],
\qquad
S_v(m)
\triangleq
\sup_{\delta\in\mathcal{Q}_v}
\|z_v^\delta(m)-z_v(m)\|_2 .
\label{eq:robust_risk_and_sensitivity}
\end{equation}

\subsection{Global Fusion Is Structurally Suboptimal}
\label{subsec:global_suboptimal}

We first consider graph-level fusion, where a single coefficient
$\alpha\in[0,1]$ is shared by all nodes:
\[
z_v(\alpha)\triangleq \alpha z_{L,v}+(1-\alpha)z_{H,v},
\qquad
R_v(\alpha)\triangleq \mathbb{E}_T[\ell(z_v(\alpha);T)].
\]
Define the best global risk and the node-wise oracle risk as
\begin{equation}
R_{\mathrm{global}}
\triangleq
\min_{\alpha\in[0,1]}\frac{1}{|V|}\sum_{v\in V}R_v(\alpha),
\qquad
R_{\mathrm{oracle}}
\triangleq
\frac{1}{|V|}\sum_{v\in V}\min_{\alpha_v\in[0,1]}R_v(\alpha_v),
\label{eq:global_oracle_risk}
\end{equation}
and let $\mathrm{Regret}\triangleq R_{\mathrm{global}}-R_{\mathrm{oracle}}$.

\begin{theorem}[Irreducible regret of global fusion]
\label{thm:static_regret}
Under a $\mu$-quadratic-growth condition and two separated spectral-preference
subpopulations with fractions $p_-,p_+>0$, whose optimal fusion sets are
contained in $[0,a_-]$ and $[a_+,1]$, respectively, with
$\Delta=a_+-a_->0$, the regret of the best global fusion rule satisfies
\begin{equation}
\mathrm{Regret}
\ge
\frac{\mu}{2}
\frac{p_-p_+}{p_-+p_+}
\Delta^2 .
\label{eq:global_regret_bound}
\end{equation}
\end{theorem}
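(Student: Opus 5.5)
The plan is to reduce the regret to a one-dimensional quadratic minimization over the shared coefficient $\alpha$. I will interpret the $\mu$-quadratic-growth condition in the standard form: for each node $v$ with optimal set $\mathcal{A}_v^\star=\arg\min_{\alpha\in[0,1]}R_v(\alpha)$ and optimal value $R_v^\star$,
\[
R_v(\alpha)-R_v^\star \ge \frac{\mu}{2}\,\mathrm{dist}(\alpha,\mathcal{A}_v^\star)^2 \qquad \text{for all } \alpha\in[0,1].
\]
Let $V_-$ be the nodes with $\mathcal{A}_v^\star\subseteq[0,a_-]$ and $V_+$ those with $\mathcal{A}_v^\star\subseteq[a_+,1]$, with $|V_\pm|=p_\pm|V|$. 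Any remaining nodes, if $p_-+p_+<1$, contribute nonnegative terms $R_v(\alpha)-R_v^\star\ge0$, so I simply drop them.

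\textbf{Step 1: a lower bound for each fixed $\alpha$.} Fix $\alpha\in[0,1]$. If $v\in V_-$, then $\mathrm{dist}(\alpha,\mathcal{A}_v^\star)\ge(\alpha-a_-)_+$. If $v\in V_+$, then $\mathrm{dist}(\alpha,\mathcal{A}_v^\star)\ge(a_+-\alpha)_+$. Averaging over nodes gives
\[
\frac{1}{|V|}\sum_{v}\big(R_v(\alpha)-R_v^\star\big)\ \ge\ \frac{\mu}{2}\Big[p_-(\alpha-a_-)_+^2+p_+(a_+-\alpha)_+^2\Big]\;\triangleq\;\frac{\mu}{2}\,g(\alpha).
\]

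\textbf{Step 2: minimize $g$.} For $\alpha\le a_-$ we have $g(\alpha)\ge p_+\Delta^2$. For $\alpha\ge a_+$ we have $g(\alpha)\ge p_-\Delta^2$. On $[a_-,a_+]$, write $\alpha=a_-+t$ with $t\in[0,\Delta]$, so $g=p_-t^2+p_+(\Delta-t)^2$. This is a convex quadratic minimized at $t=p_+\Delta/(p_-+p_+)$, with minimum value $\frac{p_-p_+}{p_-+p_+}\Delta^2$. The endpoint values $p_\pm\Delta^2$ are at least this, since $\frac{p_-p_+}{p_-+p_+}\le\min(p_-,p_+)$. Hence
\[
\min_{\alpha\in[0,1]}g(\alpha)=\frac{p_-p_+}{p_-+p_+}\Delta^2 .
\]

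\textbf{Step 3: conclude.} The regret can be written as
\[
\mathrm{Regret}=\min_\alpha \frac{1}{|V|}\sum_v\big(R_v(\alpha)-R_v^\star\big).
\]
This identity holds because $R_{\mathrm{oracle}}$ is independent of $\alpha$ and can be pulled inside the minimum. Taking the minimum over $\alpha$ in the bound from Step 1 and applying Step 2 yields \eqref{eq:global_regret_bound}.

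The main obstacle is not computational. It is pinning down the precise form of the quadratic-growth assumption, in particular that growth is measured from the whole optimal set, so that Step 1's distance bounds are valid. Once the assumption is stated in this set-distance form, the remaining steps are routine one-dimensional calculus.
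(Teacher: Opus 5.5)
Your proposal is correct and follows essentially the same route as the paper's proof. Both rewrite the regret as $\min_{\alpha}$ of the averaged excess risks, invoke quadratic growth in exactly the set-distance form the paper assumes, bound the distances below by $(\alpha-a_-)_+$ and $(a_+-\alpha)_+$ while discarding nodes outside the two subpopulations, and minimize the resulting piecewise quadratic by treating $[a_-,a_+]$ and the two outer regions separately, arriving at the same value $\frac{p_-p_+}{p_-+p_+}\Delta^2$.
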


Theorem~\ref{thm:static_regret} shows that spectral fusion is not merely an
architectural detail: under heterogeneous and separated node-wise spectral
preferences, graph-level fusion is structurally limited. This motivates
node-wise spectral fusion. ASPECT implements this implication through a node-wise spectral policy.

\subsection{From Standard Risk to Stability-Aware Perturbations}
\label{subsec:clean_vs_stable}

The regret lower bound motivates node-wise fusion, but it does not specify how a
node-wise fusion policy should behave when robustness to local perturbations is
desired. The following bound upper-bounds perturbation-aware risk by a standard
contrastive risk term and a local sensitivity term.

\begin{theorem}[Perturbation-aware risk upper bound]
\label{thm:robust_upper_bound}
Assume that for every realization of $T$, the loss $\ell(\cdot;T)$ is
$L$-Lipschitz with respect to its embedding argument on the compact region
visited during training. Then, for every node $v$ and every $m\in[0,1]$,
\begin{equation}
R_v^{\mathcal{Q}}(m)
\le
\mathcal{E}_v(m)+L S_v(m).
\label{eq:robust_upper_bound}
\end{equation}
\end{theorem}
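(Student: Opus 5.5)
The plan is to compare each perturbed loss pointwise with the clean loss using Lipschitz continuity, take expectations over $T$, and only then take the supremum over $\delta\in\mathcal{Q}_v$.

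\textbf{Step 1 (pointwise comparison).} Fix a node $v$, a coefficient $m\in[0,1]$, a perturbation $\delta\in\mathcal{Q}_v$, and a realization of $T$. Both $z_v(m)$ and $z_v^\delta(m)$ are assumed to lie in the compact region visited during training. The $L$-Lipschitz hypothesis on $\ell(\cdot;T)$ then gives
\[
\ell(z_v^\delta(m);T)\le \ell(z_v(m);T)+L\,\|z_v^\delta(m)-z_v(m)\|_2 .
\]
The displacement norm does not depend on $T$. By the definition of $S_v(m)$ in \eqref{eq:robust_risk_and_sensitivity}, it is at most $S_v(m)$, so the right-hand side is at most $\ell(z_v(m);T)+L S_v(m)$.

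\textbf{Step 2 (expectation).} Take $\mathbb{E}_T$ of both sides. Monotonicity and linearity of expectation give
\[
\mathbb{E}_T[\ell(z_v^\delta(m);T)]\le \mathcal{E}_v(m)+L S_v(m).
\]
This requires the expectations to be well defined. Continuity of $\ell(\cdot;T)$ on a compact set gives boundedness for each $T$. The only remaining requirement is integrability of the clean loss, which is implicit in the definition \eqref{eq:clean_risk}.

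\textbf{Step 3 (supremum).} The right-hand side of Step 2 does not depend on $\delta$. Taking $\sup_{\delta\in\mathcal{Q}_v}$ on the left therefore yields $R_v^{\mathcal{Q}}(m)\le \mathcal{E}_v(m)+L S_v(m)$, which is \eqref{eq:robust_upper_bound}. If $S_v(m)=\infty$, the bound holds trivially.

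\textbf{Main obstacle.} There is no real technical difficulty. The one point needing care is the compactness clause: the Lipschitz estimate in Step 1 is only valid if every perturbed embedding $z_v^\delta(m)$, for all $\delta\in\mathcal{Q}_v$, lies in the region where $\ell(\cdot;T)$ is Lipschitz. I would state this explicitly as part of the assumption.
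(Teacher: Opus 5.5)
Your proposal is correct and follows essentially the same route as the paper: a pointwise Lipschitz comparison for fixed $T$ and $\delta$, then expectation over $T$, then the supremum over $\delta\in\mathcal{Q}_v$. The only difference is that the paper bounds the displacement by $S_v(m)$ after taking the supremum rather than before, which is immaterial; your remark that every $z_v^\delta(m)$ must lie in the Lipschitz region is a fair point that the paper leaves implicit.
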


Theorem~\ref{thm:robust_upper_bound} implies that minimizing the standard
contrastive risk alone may leave the sensitivity term in this upper bound
uncontrolled. Consequently, a coefficient that minimizes standard risk can be suboptimal for
the perturbation-aware upper bound if its standard-risk advantage is outweighed
by a larger sensitivity penalty; a formal statement is given in Appendix~\ref{app:clean_suboptimal}.

The upper bound above motivates stability-aware signals, but it does not specify
how perturbation sensitivity should be exposed during training. When the
test-time local shift is unknown but constrained to $\mathcal Q_v$, a
worst-case perturbation objective upper-bounds the expected risk under any
local-shift distribution supported on $\mathcal Q_v$; see
Appendix~\ref{app:dro_surrogate}. ASPECT-S provides a practical instantiation of this surrogate with generated graph-structure and feature perturbations, which act as
empirical probes of channel-wise sensitivity rather than exact optimizers of the supremum defining $S_v(m)$.

% \subsection{Stability-Aware Perturbations as a Robust Surrogate}
% \label{subsec:adv_surrogate}

% The upper bound above motivates stability-aware signals, but it does not specify
% how perturbation sensitivity should be exposed during training. When the
% test-time local shift is unknown but constrained to $\mathcal{Q}_v$, an
% idealized worst-case perturbation objective provides a distributionally robust
% surrogate.

% \begin{proposition}[Worst-case perturbation risk upper-bounds local shifts]
% \label{prop:dro_surrogate}
% Let $\mathcal{P}_v$ be any distribution over local perturbations whose support
% is contained in $\mathcal{Q}_v$. Then, for every node $v$ and every fixed
% $m\in[0,1]$,
% \begin{equation}
% \mathbb{E}_{\delta\sim\mathcal{P}_v}
% \mathbb{E}_{T}\big[\ell(z_v^\delta(m);T)\big]
% \le
% \sup_{\delta\in\mathcal{Q}_v}
% \mathbb{E}_{T}\big[\ell(z_v^\delta(m);T)\big]
% =
% R_v^{\mathcal{Q}}(m).
% \label{eq:dro_surrogate}
% \end{equation}
% \end{proposition}

% Proposition~\ref{prop:dro_surrogate} motivates idealized worst-case
% perturbation training as a principled upper-bound surrogate for unknown local
% perturbation shifts. ASPECT-S approximates this surrogate with generated
% perturbations, which serve as stability-aware training signals. This extension
% is optional and is not required for the core node-wise fusion mechanism of
% ASPECT.

\subsection{Channel-Wise Sensitivity Induces a Node-Wise Tradeoff}
\label{subsec:channel_tradeoff}

To connect the sensitivity term to dual-channel spectral fusion, define
$d_{L,v}\triangleq
\sup_{\delta\in\mathcal{Q}_v}\|z_{L,v}^{\delta}-z_{L,v}\|_2$
and
$d_{H,v}\triangleq
\sup_{\delta\in\mathcal{Q}_v}\|z_{H,v}^{\delta}-z_{H,v}\|_2$.
Assume the perturbed fused embedding uses the same fixed coefficient,
$z_v^\delta(m)=m z_{L,v}^\delta+(1-m)z_{H,v}^\delta$. Then, combining
Theorem~\ref{thm:robust_upper_bound} with the triangle inequality gives, for
every node $v$ and every fixed $m\in[0,1]$,
\begin{equation}
R_v^{\mathcal{Q}}(m)
\le
\mathcal{E}_v(m)
+
L\big(m d_{L,v}+(1-m)d_{H,v}\big).
\label{eq:channel_tradeoff_bound}
\end{equation}

This bound shows that the preferred node-wise mixture can depend on both the
standard contrastive risk and the relative local sensitivity of the two
channels. It allows either channel to be favored depending on the node and its
local perturbation behavior. The fixed-coefficient form also matches the
ASPECT-S perturbation-generation step, where the gate is computed on the clean
graph and held fixed during the inner perturbation search.

This tradeoff motivates the auxiliary policy regularizer used in ASPECT. In
ASPECT, channel-wise standard contrastive losses provide empirical utility
evidence for the two channels. In ASPECT-S, this evidence is augmented with
empirical channel-wise sensitivity estimates obtained from generated
perturbations. The soft policy target in Section~\ref{subsec:aspect_core} can
therefore be viewed as a plug-in entropy-regularized policy over channel costs:
lower estimated cost yields a larger target weight, while the entropy term keeps
the target soft. A formal statement is given in Appendix~\ref{app:policy_regularizer}.

\paragraph{Implications.}
The regret theorem motivates node-level adaptive spectral fusion, which ASPECT
implements through a node-wise spectral policy. The perturbation-aware bound and
the worst-case perturbation surrogate motivate ASPECT-S as an optional
stability-aware extension when robustness to local graph-structure or feature
perturbations is desired. Finally, the channel-wise tradeoff in
Eq.~\eqref{eq:channel_tradeoff_bound} motivates the policy regularizer: ASPECT
uses channel-wise contrastive evidence to form a utility-aware auxiliary target,
while ASPECT-S augments this target with empirical sensitivity estimates from
generated perturbations.
\begin{figure*}[t]
    \centering
    \includegraphics[width=\textwidth]{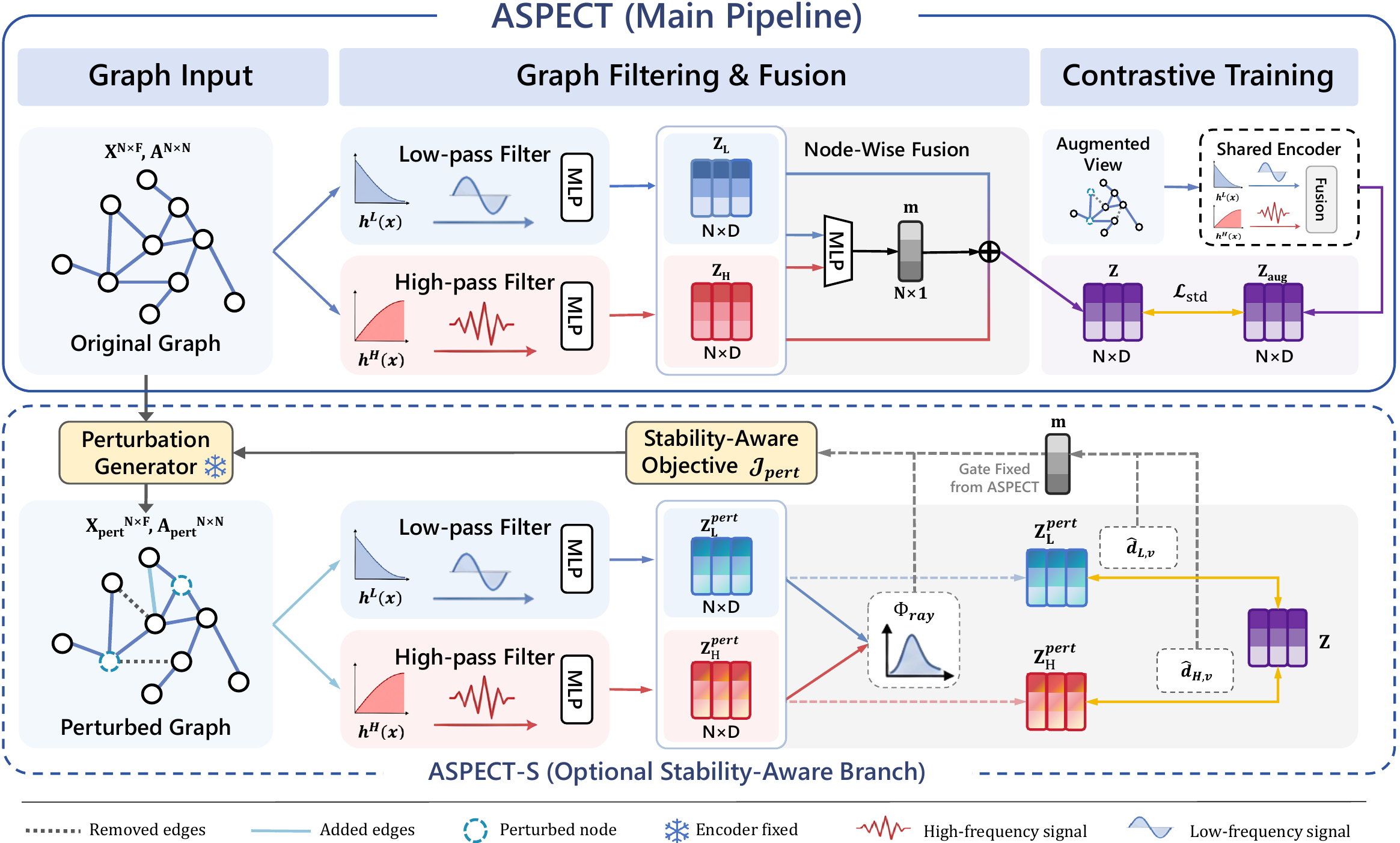}
    \caption{\textbf{Overview of ASPECT and the optional ASPECT-S branch.}
    \textbf{Top:} ASPECT constructs low- and high-frequency views with learnable
    spectral filters, learns a node-wise spectral policy $m$ to adaptively fuse the
    two channels, and optimizes a clean contrastive loss
    $\mathcal L_{\mathrm{std}}$ between the fused embedding $\mathbf{Z}$ and the
    augmented-view embedding $\mathbf{Z}_{aug}$.
    \textbf{Bottom:} ASPECT-S is an optional stability-aware branch used when
    robustness to local graph-structure or feature perturbations is desired. It
    generates perturbed graphs and features while fixing the clean-graph gate, and
    uses the resulting perturbed channel representations to obtain empirical
    channel-wise sensitivity estimates. The Rayleigh-based term $\Phi_{ray}$ acts as
    a spectral search bias for producing informative perturbations. Auxiliary losses
    such as the policy regularizer are omitted for clarity.}
    \label{fig:framework}
\end{figure*}

\section{Method}
\label{sec:method}

We introduce ASPECT, a spectral graph contrastive learning method that adaptively fuses low- and high-frequency views at the node level motivated by the regret lower bound in Section~\ref{sec:theory}. ASPECT constructs low- and
high-frequency views, learns a node-wise spectral policy to fuse them, and
optimizes a standard contrastive objective with an auxiliary utility-aware
spectral policy regularizer. ASPECT-S is an optional stability-aware extension
that adds generated perturbations when robustness to local graph-structure or
feature perturbations is desired. During ASPECT-S perturbation generation, the
gate is computed on the clean graph and held fixed, matching the fixed-coefficient
analysis in Section~\ref{subsec:channel_tradeoff}; during the outer update, the
encoder and gate are updated jointly. Figure~\ref{fig:framework} emphasizes the
main data flow, and auxiliary training signals such as
$\mathcal L_{\mathrm{pol}}$ are omitted for clarity.

\subsection{ASPECT: Node-wise Spectral Policy}
\label{subsec:aspect_core}

\paragraph{Low- and high-frequency views.}
Given a graph $G=(\mathbf A,\mathbf X)$ with normalized Laplacian $\mathbf L$,
ASPECT builds on learnable spectral filters and focuses on node-wise spectral
fusion: how the resulting low- and high-frequency views should be combined for
each node. Following spectral GCL practice, we approximate the filters with
truncated Chebyshev polynomials:
\begin{equation}
\label{eq:spectral_views}
\mathbf{Z}_L
=
f_{\theta}\!\left(
\sum_{k=0}^{K} w_k^L T_k(\widetilde{\mathbf L})\mathbf X
\right),
\qquad
\mathbf{Z}_H
=
f_{\theta}\!\left(
\sum_{k=0}^{K} w_k^H T_k(\widetilde{\mathbf L})\mathbf X
\right),
\end{equation}
where $\widetilde{\mathbf L}=2\mathbf L/\lambda_{\max}-\mathbf I$,
$T_k(\cdot)$ is the $k$-th Chebyshev basis, $f_\theta$ is a shared projection
network, and $\{w_k^L,w_k^H\}$ parameterize the two spectral responses.
Implementation details for filter parameterization are given in
Appendix~\ref{app:spectral_filter_parameterization}.

\paragraph{Node-wise spectral policy.}
For each node $v$, ASPECT predicts a node-wise spectral policy from the two
channel representations:
\begin{equation}
\label{eq:gate}
m_v
=
\sigma\!\left(
\mathrm{MLP}_{g}
\left([\mathbf z_{L,v}\,\|\,\mathbf z_{H,v}]\right)
\right),
\end{equation}
where $\sigma(\cdot)$ is the sigmoid function and $\mathrm{MLP}_g$ is a
lightweight gating network. The fused node representation is
\begin{equation}
\label{eq:fusion}
\mathbf z_v
=
m_v\mathbf z_{L,v}
+
(1-m_v)\mathbf z_{H,v}.
\end{equation}
A larger $m_v$ assigns more weight to the low-frequency view, while a smaller
$m_v$ assigns more weight to the high-frequency view. This formulation turns
spectral fusion from a graph-level scalar into a node-conditioned decision rule,
as motivated by Theorem~\ref{thm:static_regret}.

\paragraph{Standard contrastive objective.}
Let $G_{\mathrm{aug}}$ be a randomly augmented view generated by standard graph
augmentations such as edge dropping or feature masking, and let
$\mathbf z_v^{\mathrm{aug}}$ be the fused representation of node $v$ from this
view. With normalized InfoNCE loss $\ell_{\mathrm{NCE}}(\cdot,\cdot)$, the
standard contrastive objective is
\begin{equation}
\label{eq:std_loss}
\mathcal L_{\mathrm{std}}
=
\frac{1}{|\mathcal V|}
\sum_{v\in\mathcal V}
\ell_{\mathrm{NCE}}
\left(
\mathbf z_v,\mathbf z_v^{\mathrm{aug}}
\right).
\end{equation}

\paragraph{Utility-aware spectral policy regularizer.}
The channel-wise tradeoff in Eq.~\eqref{eq:channel_tradeoff_bound} suggests that
a node-wise spectral mixture can depend on channel-wise standard risk and, when
perturbations are considered, channel-wise sensitivity. We therefore train the
gate with an auxiliary policy regularizer constructed from channel-wise
evidence. In ASPECT, the target is utility-aware and is based on channel-wise
standard contrastive losses. In ASPECT-S, the same target is extended with
empirical channel-wise sensitivity estimates induced by generated perturbations.

For each channel $c\in\{L,H\}$, let $\mathbf z_{c,v}$ be the channel-specific
representation of node $v$, and let $\mathbf z_{c,v}^{\mathrm{aug}}$ be the
corresponding representation from the augmented graph. Define
\begin{equation}
\label{eq:channel_std_loss}
\ell_{c,v}^{\mathrm{std}}
=
\ell_{\mathrm{NCE}}
\left(
\mathbf z_{c,v},
\mathbf z_{c,v}^{\mathrm{aug}}
\right),
\qquad c\in\{L,H\}.
\end{equation}
When ASPECT-S is enabled, the generated perturbation gives perturbed channel
embeddings $\mathbf z_{L,v}^{\mathrm{pert}}$ and
$\mathbf z_{H,v}^{\mathrm{pert}}$, from which we compute empirical sensitivity
evidence:
\begin{equation}
\label{eq:empirical_sensitivity}
\widehat d_{c,v}
=
\left\|
\mathbf z_{c,v}^{\mathrm{pert}}
-
\mathbf z_{c,v}
\right\|_2,
\qquad c\in\{L,H\}.
\end{equation}

We normalize each evidence type separately over the union of both channels
within the current batch: channel-wise losses are normalized over
$\{\ell_{L,v}^{\mathrm{std}},\ell_{H,v}^{\mathrm{std}}\}_{v\in\mathcal V}$,
and, in ASPECT-S, sensitivity estimates are normalized over
$\{\widehat d_{L,v},\widehat d_{H,v}\}_{v\in\mathcal V}$. This preserves the
low/high relative scale within each evidence type. Let
$\operatorname{Norm}_{\ell}(\cdot)$ and $\operatorname{Norm}_{d}(\cdot)$ denote
the corresponding normalizations. The channel cost is
\begin{equation}
\label{eq:channel_cost}
b_{c,v}
=
\begin{cases}
\operatorname{Norm}_{\ell}\!\left(\ell_{c,v}^{\mathrm{std}}\right),
& \text{for ASPECT},\\[2mm]
\operatorname{Norm}_{\ell}\!\left(\ell_{c,v}^{\mathrm{std}}\right)
+
\lambda_s
\operatorname{Norm}_{d}\!\left(\widehat d_{c,v}\right),
& \text{for ASPECT-S},
\end{cases}
\qquad c\in\{L,H\}.
\end{equation}
Thus, ASPECT uses a utility-aware policy target, whereas ASPECT-S uses a
utility-sensitivity-aware policy target.

The soft target for the low-frequency policy weight is
\begin{equation}
\label{eq:policy_target}
\widetilde m_v
=
\frac{
\exp(-b_{L,v}/\tau_g)
}{
\exp(-b_{L,v}/\tau_g)
+
\exp(-b_{H,v}/\tau_g)
},
\end{equation}
where $\tau_g>0$ is a gate-temperature hyperparameter. Lower channel cost leads
to a larger target weight for that channel. In implementation,
$\widetilde m_v$ and the channel costs used to construct it are detached from
the computational graph, so $\mathcal L_{\mathrm{pol}}$ updates the gate
through $m_v$ rather than through the pseudo-target. We use the binary
cross-entropy form
\begin{equation}
\label{eq:policy_loss}
\mathcal L_{\mathrm{pol}}
=
-\frac{1}{|\mathcal V|}
\sum_{v\in\mathcal V}
\left[
\widetilde m_v\log m_v
+
(1-\widetilde m_v)\log(1-m_v)
\right].
\end{equation}
This regularizer serves as a soft auxiliary target that aligns the gate with
channel-wise evidence while preserving the flexibility of the learned node-wise
policy. The ASPECT objective is
\begin{equation}
\label{eq:aspect_loss}
\mathcal L_{\mathrm{ASPECT}}
=
\mathcal L_{\mathrm{std}}
+
\lambda_{\mathrm{pol}}\mathcal L_{\mathrm{pol}}.
\end{equation}

\subsection{ASPECT-S: Optional Stability-Aware Training}
\label{subsec:aspect_s}

ASPECT-S is an optional extension for settings where robustness to local
graph-structure or feature perturbations is desired. Let $\mathcal Q$ denote the
allowed local perturbation set over graph structure and node features. ASPECT-S
generates a perturbed graph
$G_{\mathrm{pert}}=(\mathbf A_{\mathrm{pert}},\mathbf X_{\mathrm{pert}})$ by
approximately maximizing
\begin{equation}
\label{eq:pert_objective}
\mathcal J_{\mathrm{pert}}
=
\mathcal L_{\mathrm{gen}}
+
\lambda_{\mathrm{ray}}\Phi_{\mathrm{ray}}
\end{equation}
over perturbation variables, with encoder parameters and clean-graph gate
values fixed. The gate is fixed only during this inner perturbation-generation
step; the encoder and gate are updated jointly in the outer minimization.

For a perturbed graph, define the channel-wise stability loss
\begin{equation}
\label{eq:stab_loss}
\mathcal L_{\mathrm{stab}}
=
\frac{1}{|\mathcal V|}
\sum_{v\in\mathcal V}
m_v
\ell_{\mathrm{NCE}}
\left(
\mathbf z_{L,v}^{\mathrm{pert}},
\mathbf z_v
\right)
+
\frac{1}{|\mathcal V|}
\sum_{v\in\mathcal V}
(1-m_v)
\ell_{\mathrm{NCE}}
\left(
\mathbf z_{H,v}^{\mathrm{pert}},
\mathbf z_v
\right),
\end{equation}
where $\mathbf z_v$ is the fused clean representation. During perturbation
generation, the same expression is used as $\mathcal L_{\mathrm{gen}}$ and is
maximized over perturbation variables; during the outer update, it is minimized
as $\mathcal L_{\mathrm{stab}}$ to train the encoder and gate.

The Rayleigh-based term is instantiated as
\[
\Phi_{\mathrm{ray}}
=
\mathcal R(\mathbf A_{\mathrm{pert}},\mathbf Z_L^{\mathrm{pert}})
-
\mathcal R(\mathbf A_{\mathrm{pert}},\mathbf Z_H^{\mathrm{pert}}),
\qquad
\mathcal R(\mathbf A,\mathbf Z)
=
\frac{
\mathrm{Tr}(\mathbf Z^\top \mathbf L_{\mathbf A}\mathbf Z)
}{
\mathrm{Tr}(\mathbf Z^\top\mathbf Z)
}.
\]
Since $\mathcal J_{\mathrm{pert}}$ is maximized over perturbation variables,
$\Phi_{\mathrm{ray}}$ biases the search toward perturbations that increase the Rayleigh energy of the perturbed low-frequency channel relative to the perturbed high-frequency channel. We use it as a spectral profile-shift bias for
generating informative perturbations. A detailed discussion of this term as a spectral search bias is provided in
Appendix~\ref{app:aspects_pert_gen}.

The ASPECT-S objective for the outer update is
\begin{equation}
\label{eq:aspect_s_loss}
\mathcal L_{\mathrm{ASPECT\text{-}S}}
=
\mathcal L_{\mathrm{std}}
+
\lambda_{\mathrm{stab}}\mathcal L_{\mathrm{stab}}
+
\lambda_{\mathrm{pol}}\mathcal L_{\mathrm{pol}}.
\end{equation}
Here $\mathcal L_{\mathrm{pol}}$ uses the ASPECT-S branch of
Eq.~\eqref{eq:channel_cost}, with empirical sensitivity estimates from
Eq.~\eqref{eq:empirical_sensitivity}. When ASPECT-S is disabled, the method
reduces to ASPECT in Eq.~\eqref{eq:aspect_loss}.

\subsection{Optimization and Complexity}
\label{subsec:optimization_complexity}

To reduce computational overhead, our implementation uses sampled contrastive scoring to
avoid dense pairwise contrastive matrices on large graphs. ASPECT-S further uses
four engineering optimizations to reduce the cost of perturbation generation. First, structural perturbations are restricted to sparse candidate edge sets, while feature perturbations are constrained by a prescribed feature budget. Second, we
warm up the ASPECT objective and generate perturbations only every few epochs.
Third, the policy target $\widetilde m_v$ is detached and channel-wise scores
are computed without gradient tracking, so $\mathcal L_{\mathrm{pol}}$
introduces no additional encoder forward pass. 
Finally, the Rayleigh term is computed via sparse Laplacian-vector products
rather than eigendecomposition. We discuss the incremental computational
complexity of ASPECT and ASPECT-S in Section~\ref{subsec:analysis}, with
implementation details in Appendix~\ref{app:implementation}.
\section{Experiments}
\label{sec:experiments}

This section empirically evaluates ASPECT and its optional stability-aware
extension ASPECT-S. Our experiments are organized around four questions:
(Q1) Clean representation quality: does node-level adaptive spectral fusion
improve performance on homophilic and heterophilic graphs?
(Q2) Perturbation performance: does ASPECT-S improve performance under joint
graph-structure and feature perturbations when enabled?
(Q3) Component contribution: how do the node-wise spectral policy, policy
regularizer, Rayleigh-based search bias, and sensitivity-aware target affect
performance?
(Q4) Policy behavior and complexity: does the learned policy exhibit
node-adaptive spectral mixtures, and what incremental computational complexity
is introduced?

\subsection{Experimental Setup}
\label{subsec:exp_setup}

\paragraph{Datasets.}
We conduct node classification experiments on nine widely used benchmark graphs
covering both homophilic and heterophilic regimes. Homophilic datasets include
\texttt{Cora}, \texttt{Citeseer}, and \texttt{Pubmed}~\citep{sen2008collective}.
Heterophilic datasets include \texttt{Cornell}, \texttt{Texas},
\texttt{Wisconsin}, \texttt{Actor}, \texttt{Chameleon}, and
\texttt{Squirrel}~\citep{geomgcn,rozemberczki2021multi}. Dataset descriptions and preprocessing details
are provided in Appendix~\ref{app:dataset}.

\paragraph{Baselines.}
We compare ASPECT with representative graph representation learning baselines,
including general graph contrastive learning methods, augmentation-robust GCL
methods, and spectral or heterophily-oriented GCL methods. Detailed descriptions
and configurations are provided in Appendix~\ref{app:implementation}. Among these
methods, PolyGCL~\citep{polygcl} is the most direct external comparison for our
fusion-motivation theory: it constructs dual spectral channels but uses a
node-agnostic fusion rule.

\paragraph{Self-supervised training and linear evaluation.}
Following the standard protocol~\cite{dgi}, each method is first pretrained
in a self-supervised manner on the unlabeled graph. We then freeze the encoder
and train a linear classifier on top of the learned node representations. We use
10 random data splits with 60\%/20\%/20\% train/validation/test partitions
following \citet{chien2020adaptive}, and report mean test accuracy with standard
deviation across splits. Hyperparameters are selected using validation accuracy
on the clean graph only, so that perturbation results are not tuned on perturbed data.

\paragraph{Perturbation evaluation protocol.}
To evaluate performance under graph-structure and feature perturbations, we evaluate a joint perturbation setting that combines graph-structure and feature perturbations. Following \citet{ariel}, Metattack~\citep{metattack} is used to perturb graph edges, and random
feature masking is applied to corrupt node attributes.
In the fixed-budget
setting, the edge perturbation ratio and feature masking ratio are both set to
10\%. In the variable-budget setting, both ratios are swept jointly using the
same budget. For each split, Metattack uses only training labels and a fixed surrogate model; full perturbation-generation details are provided in Appendix~\ref{app:reproduce_note}.
Datasets with very small node counts
are omitted from perturbation evaluation due to high variance and unstable graph statistics under perturbations; the evaluated datasets are explicitly stated in each table or figure.

\subsection{Clean Representation Quality}
\label{subsec:clean_results}

Table~\ref{table:real-world} summarizes clean linear-probe performance against
representative baselines, with the full comparison over all baselines provided
in Appendix~\ref{app:full_clean_results}. Across the full comparison, ASPECT
achieves the best performance on 8 out of 9 datasets and the second-best result
on the remaining dataset, demonstrating competitive clean representation quality
across diverse graph regimes. Compared with PolyGCL, the closest dual-spectral
baseline with node-agnostic fusion, ASPECT improves accuracy on all nine
datasets, suggesting that moving beyond node-agnostic fusion is beneficial.
ASPECT-S, the optional stability-aware extension, remains competitive under the
same clean evaluation protocol, achieving the best result on \texttt{Squirrel}
and second-best performance on most other datasets. This indicates that the
additional stability-aware training signal does not substantially compromise
clean representation quality.

\begin{table}[htbp]
\caption{Node classification accuracy (mean $\pm$ standard deviation, \%) on
nine homophilic and heterophilic benchmarks under the linear evaluation
protocol. The main text reports representative baselines; full results are
provided in Appendix~\ref{app:full_clean_results}. Boldface indicates the best performance and underline indicates the second-best
performance.}
\setlength{\tabcolsep}{0.39mm}
\centering
\footnotesize % Keeps the font size small for compact presentation
\begin{tabular}{l|ccc|cccccc} % Column definitions (vertical lines still here as per your original table)
\toprule
\multirow{2}{*}{\textbf{Methods}} & \multicolumn{3}{c|}{\textbf{Homophilic Datasets}} & \multicolumn{6}{c}{\textbf{Heterophilic Datasets}} \\
\noalign{\smallskip}
\cline{2-10} % Horizontal line spanning columns 2 to 10 for dataset categories
\noalign{\smallskip}
& \textbf{Cora} & \textbf{Citeseer} & \textbf{Pubmed} & \textbf{Cornell} & \textbf{Texas} & \textbf{Wisconsin} & \textbf{Actor} & \textbf{Chameleon} & \textbf{Squirrel} \\
\midrule % Changed from \hline (separating header from data)
% BCE-based GCL methods
DGI & \valstd{85.88}{0.95} & \valstd{76.44}{0.84} & \valstd{82.13}{0.24} & \valstd{70.82}{2.71} & \valstd{81.48}{2.79} & \valstd{75.00}{4.22} & \valstd{32.09}{1.18} & \valstd{58.23}{0.70} & \valstd{38.80}{0.76} \\
% InfoNCE-based GCL methods
GRACE & \valstd{83.27}{0.74} & \valstd{73.79}{0.57} & \valstd{81.71}{0.14} & \valstd{60.66}{2.94} & \valstd{75.74}{3.12} & \valstd{72.13}{1.99} & \valstd{31.97}{1.13} & \valstd{59.52}{2.65} & \valstd{42.68}{1.10} \\
GCA & \valstd{84.09}{0.85} & \valstd{75.23}{1.19} & \valstd{82.01}{0.34} & \valstd{53.11}{4.01} & \valstd{81.97}{1.58} & \valstd{73.50}{2.85} & \valstd{31.13}{1.11} & \valstd{65.54}{1.10} & \valstd{47.13}{0.93} \\
GREET & \valstd{85.16}{0.77} & \valstd{79.06}{1.34} & \valstd{85.64}{0.28} & \valstd{78.36}{2.77} & \valstd{78.03}{3.94} & \valstd{84.63}{2.10} & \valstd{37.12}{0.67} & \valstd{60.57}{1.03} & \valstd{42.80}{1.01} \\
CCA-SSG & \valstd{87.39}{0.89} & \valstd{79.60}{1.01} & \valstd{84.95}{0.26} & \valstd{78.69}{4.61} & \valstd{87.87}{1.89} & \valstd{82.88}{3.58} & \valstd{34.86}{1.13} & \valstd{59.84}{1.21} & \valstd{41.50}{1.12} \\
%\midrule
% Heterophilic GCL methods
SP-GCL & \valstd{82.99}{1.18} & \valstd{75.54}{1.06} & \valstd{85.74}{0.21} & \valstd{69.41}{1.49} & \valstd{69.76}{1.23} & \valstd{69.34}{0.77} & \valstd{35.92}{0.67} & \valstd{69.23}{1.23} & \valstd{53.05}{1.05} \\
%HLCL & \valstd{85.53}{1.03} & \valstd{76.79}{0.60} & \valstd{85.13}{0.18} & \valstd{64.00}{8.98} & \valstd{78.38}{5.08} & \valstd{79.50}{4.50} & \valstd{40.56}{0.70} & \valstd{63.86}{1.34} & \valstd{44.49}{0.68} \\
PolyGCL & \valstd{87.57}{0.62} & \valstd{79.81}{0.85} & \valstd{\underline{87.15}}{0.27} & \valstd{82.62}{3.11} & \valstd{88.03}{1.80} & \valstd{85.50}{1.88} & \valstd{41.15}{0.88} & \valstd{71.62}{0.96} & \valstd{56.49}{0.72} \\
S3GCL & \valstd{87.04}{1.25} & \valstd{77.48}{0.80} &\valstd{86.03}{0.37} & \valstd{81.27}{3.67} & \valstd{86.12}{3.91} & \valstd{84.56}{2.71} & \valstd{40.06}{1.58} & \valstd{71.88}{1.91} & \valstd{56.90}{1.37} \\
%\midrule
RDGI & \valstd{83.53}{1.23} & \valstd{78.99}{0.80} & \valstd{80.89}{1.55} & \valstd{67.21}{6.06} & \valstd{69.01}{4.59} & \valstd{56.75}{4.12} & \valstd{32.74}{1.27} & \valstd{59.95}{1.11} & \valstd{42.71}{0.70} \\
ARIEL & \valstd{87.30}{0.71} & \valstd{79.53}{0.61} & \valstd{86.42}{0.47} & \valstd{70.70}{2.46} & \valstd{76.19}{5.02} & \valstd{71.15}{2.38} & \valstd{37.68}{1.03} & \valstd{64.53}{1.47} & \valstd{42.42}{1.53} \\
\midrule
\textbf{ASPECT} & \valstd{\textbf{88.94}}{1.10} & \valstd{\textbf{81.82}}{0.90} & \valstd{\textbf{87.75}}{1.03} & \valstd{\textbf{89.18}}{2.77} & \valstd{\textbf{91.76}}{2.09} & \valstd{\textbf{90.03}}{2.09} & \valstd{\textbf{42.64}}{1.55} & \valstd{\textbf{72.88}}{1.90} & \valstd{\underline{59.39}}{1.76} \\
\textbf{ASPECT-S} & \valstd{\underline{88.69}}{0.72} & \valstd{\underline{81.30}}{0.85} & \valstd{86.92}{0.69} & \valstd{\underline{88.68}}{2.26} & \valstd{\underline{90.41}}{1.97} & \valstd{\underline{88.00}}{2.12} & \valstd{\underline{41.73}}{1.30} & \valstd{\underline{72.48}}{1.43} & \valstd{\textbf{59.91}}{0.90} \\
\bottomrule 
\end{tabular}

\label{table:real-world}
\end{table}

\subsection{Performance under Perturbations}
\label{subsec:perturbation_performance}

We evaluate a fixed-budget joint perturbation setting, where graph-structure
and feature perturbations are applied simultaneously. Metattack perturbs 10\%
of edges and feature masking randomly masks 10\% of entries in the node-feature matrix. Table~\ref{table:robustness_attack} summarizes results against representative baselines; the full comparison is provided in Appendix~\ref{app:full_perturbation_results}. Across the full
comparison,  ASPECT-S achieves the highest perturbed
accuracy on all evaluated datasets and the lowest average relative accuracy
drop (\textbf{6.51\%}). Compared with ASPECT, ASPECT-S consistently improves
perturbed accuracy, showing that the optional stability-aware branch provides
useful training signals when robustness to graph-structure and feature
perturbations is desired. Compared with PolyGCL, the closest node-agnostic
dual-spectral fusion baseline, ASPECT-S reduces the average drop
(6.51\% vs. 13.22\%), suggesting that node-level adaptive fusion can work effectively together with stability-aware perturbation signals under perturbed evaluation. ASPECT-S also outperforms ARIEL, a robust GCL baseline, in both
average degradation and absolute perturbed accuracy.
Variable-budget results are provided in
Appendix~\ref{app:full_perturbation_results}, where edge perturbation and feature masking ratios are swept jointly using the same budget ratio.

\begin{table}[htbp]
\caption{Node classification accuracy (mean $\pm$ standard deviation, \%) under
fixed-budget joint graph-structure and feature perturbations. We report selected representative baselines in the main text and
provide the full comparison in Appendix~\ref{app:full_perturbation_results}. Boldface indicates the best result and underline indicates the second-best result.}
\centering
\footnotesize % Keeps the font size small for compact presentation
\setlength{\tabcolsep}{4pt}
\begin{tabular}{l|ccc|ccc|c} % Adjusted column count: Methods | 3 Homophilic | 3 Heterophilic | 1 Avg. Drop
\toprule
\multirow{2}{*}{\textbf{Methods}} & \multicolumn{3}{c|}{\textbf{Homophilic Datasets}} & \multicolumn{3}{c|}{\textbf{Heterophilic Datasets}} & \multirow{2}{*}{\makecell{\textbf{Avg.} \\ \textbf{Drop (\%)}}} \\
\noalign{\smallskip} % Small vertical space
\cline{2-7} % Horizontal line spanning columns 2 to 7 for dataset categories
\noalign{\smallskip} % Small vertical space
& \textbf{Cora} & \textbf{Citeseer} & \textbf{Pubmed} & \textbf{Actor} & \textbf{Chameleon} & \textbf{Squirrel} & \\
\midrule % Separating header from data
DGI & \valstd{79.62}{0.62} & \valstd{72.25}{0.85} & \valstd{74.29}{1.01} & \valstd{30.28}{1.32} & \valstd{51.47}{0.70} & \valstd{32.94}{0.73} & 9.11 \\
GRACE & \valstd{77.08}{1.28} & \valstd{70.67}{0.86} & \valstd{75.25}{0.60} & \valstd{30.78}{0.71} & \valstd{51.38}{1.75} & \valstd{32.76}{1.07} & 10.03 \\
GCA & \valstd{76.39}{0.92} & \valstd{56.55}{1.31} & \valstd{71.32}{0.87} & \valstd{31.87}{0.97} & \valstd{58.75}{1.09} & \valstd{37.20}{0.90} & 12.68 \\
GREET & \valstd{78.80}{1.45} & \valstd{75.44}{0.59} & \valstd{79.47}{0.57} & \valstd{34.46}{1.23} & \valstd{51.77}{1.55} & \valstd{35.64}{1.32} & 9.61 \\
CCA-SSG & \valstd{82.79}{1.28} & \valstd{74.88}{0.72} & \valstd{77.01}{0.90} & \valstd{30.70}{0.77} & \valstd{49.63}{1.09} & \valstd{31.23}{1.44} & 12.57 \\
%\midrule
SP-GCL & \valstd{76.32}{1.11} & \valstd{70.12}{1.07} & \valstd{74.76}{0.79} & \valstd{30.77}{0.76} & \valstd{62.02}{1.72} & \valstd{41.94}{1.32} & 12.29 \\
PolyGCL & \valstd{83.18}{0.78} & \valstd{72.51}{1.25} & \valstd{77.82}{0.83} & \valstd{\underline{37.35}}{0.90} & \valstd{59.01}{1.35} & \valstd{40.89}{1.40} & 13.22 \\
S3GCL & \valstd{80.31}{0.62} & \valstd{71.72}{1.40} & \valstd{79.46}{1.57} & \valstd{36.03}{1.28} & \valstd{59.89}{1.99} & \valstd{40.29}{1.75} & 13.12 \\
%\midrule
RDGI & \valstd{78.85}{0.96} & \valstd{73.92}{0.68} & \valstd{74.12}{1.41} & \valstd{30.37}{1.47} & \valstd{52.66}{0.94} & \valstd{34.00}{0.63} & 10.03\\
ARIEL & \valstd{\underline{84.80}}{1.01} & \valstd{76.17}{1.39} & \valstd{81.08}{0.95} & \valstd{32.33}{0.43} & \valstd{56.18}{1.08} & \valstd{36.09}{1.11} & \underline{9.22}\\
\midrule
\textbf{ASPECT} & \valstd{84.08}{1.41} & \valstd{\underline{77.44}}{0.86} & \valstd{\underline{82.38}}{0.52} & \valstd{36.14}{0.47} & \valstd{\underline{64.18}}{1.93} & \valstd{\underline{46.37}}{1.08} & 11.01 \\
\textbf{ASPECT-S} & \valstd{\textbf{85.30}}{0.72} & \valstd{\textbf{78.86}}{0.70} & \valstd{\textbf{84.93}}{0.44} & \valstd{\textbf{39.20}}{0.63} & \valstd{\textbf{66.97}}{1.70} & \valstd{\textbf{50.17}}{0.92} & \textbf{6.51} \\
\bottomrule
\end{tabular}
\label{table:robustness_attack}
\end{table}

% \begin{figure}[h]
%   % \vskip 0.2in
%   \begin{center}
%     \centerline{\includegraphics[width=\textwidth]{mechanism_verification_chameleon2.pdf}}
%     \caption{
%         \textbf{Mechanism verification on \texttt{Chameleon}.}
%         ASPECT is pretrained on the clean graph and evaluated on clean and attacked graphs.
%         \textbf{(a)} Distribution of node-wise gates $m_v$ (KDE). 
%         \textbf{(b)} Mean $m_v$ across five local-homophily quantiles (Q1--Q5; shaded: $\pm$ std).
%         }

%     \label{fig:mechanism_verification}
%   \end{center}
% \end{figure}

\subsection{Ablation, Policy Behavior, and Complexity}
\label{subsec:analysis}

\paragraph{Component ablation.}
Table~\ref{table:ablation} ablates the main components of ASPECT and ASPECT-S
on representative homophilic and heterophilic datasets. For ASPECT, replacing
the node-wise spectral policy with a global fusion rule consistently reduces
clean performance, supporting the
importance of node-level adaptive fusion. Removing the utility-aware policy
regularizer further degrades both clean and perturbed results, indicating that
channel-wise contrastive evidence provides useful auxiliary supervision for
learning the spectral policy.
For ASPECT-S, we ablate two components of the optional stability-aware branch.
Removing the Rayleigh-based spectral search bias weakens performance under
joint graph-structure and feature perturbations, suggesting that spectrally
informative perturbation generation is beneficial. Removing the sensitivity
term from the policy target causes a larger degradation, especially under
perturbations, supporting the usefulness of empirical
channel-wise sensitivity estimates in the stability-aware extension.

\begin{table}[htbp]
\caption{
Ablation study on representative homophilic and heterophilic datasets.
We report node classification accuracy (mean $\pm$ standard deviation, \%)
on clean graphs and under fixed-budget joint graph-structure and feature
perturbations. Bold indicates the best performance within each ASPECT or ASPECT-S block.
}
\label{table:ablation}
\centering
\footnotesize
\setlength{\tabcolsep}{2.2pt}
\begin{tabular}{l|cc|cc|cc|cc}
\toprule
\multirow{2}{*}{\textbf{Variant}}
& \multicolumn{2}{c|}{\textbf{Cora}}
& \multicolumn{2}{c|}{\textbf{Pubmed}}
& \multicolumn{2}{c|}{\textbf{Actor}}
& \multicolumn{2}{c}{\textbf{Squirrel}} \\
\cmidrule(lr){2-3}
\cmidrule(lr){4-5}
\cmidrule(lr){6-7}
\cmidrule(lr){8-9}
& \textbf{Clean} & \textbf{Pert.}
& \textbf{Clean} & \textbf{Pert.}
& \textbf{Clean} & \textbf{Pert.}
& \textbf{Clean} & \textbf{Pert.} \\
\midrule
\textbf{ASPECT}
& \valstd{\textbf{88.94}}{1.10} & \valstd{\textbf{84.08}}{1.41}
& \valstd{\textbf{87.75}}{1.03} & \valstd{\textbf{82.38}}{0.52}
& \valstd{\textbf{42.64}}{1.55} & \valstd{36.14}{0.47}
& \valstd{\textbf{59.39}}{1.76} & \valstd{\textbf{46.37}}{1.08} \\
\quad Global Fusion
& \valstd{87.49}{0.74} & \valstd{83.20}{0.85}
& \valstd{86.93}{0.68} & \valstd{78.96}{0.97}
& \valstd{41.61}{0.99} & \valstd{\textbf{37.07}}{1.51}
& \valstd{57.05}{0.88} & \valstd{43.39}{0.94} \\
\quad w/o PolicyReg
& \valstd{87.15}{1.76} & \valstd{81.19}{1.46}
& \valstd{85.40}{1.90} & \valstd{79.17}{1.67}
& \valstd{41.76}{2.08} & \valstd{35.28}{2.10}
& \valstd{57.92}{1.00} & \valstd{41.90}{1.49} \\
\midrule
\textbf{ASPECT-S}
& \valstd{\textbf{88.69}}{0.72} & \valstd{\textbf{85.30}}{0.72}
& \valstd{\textbf{86.92}}{0.69} & \valstd{\textbf{84.93}}{0.44}
& \valstd{\textbf{41.73}}{1.30} & \valstd{\textbf{39.20}}{0.63}
& \valstd{\textbf{59.91}}{0.90} & \valstd{\textbf{50.17}}{0.92} \\
\quad w/o Rayleigh
& \valstd{87.91}{0.99} & \valstd{83.64}{1.04}
& \valstd{86.18}{1.06} & \valstd{82.83}{0.83}
& \valstd{41.22}{1.43} & \valstd{38.06}{1.36}
& \valstd{58.43}{1.77} & \valstd{47.61}{1.87} \\
\quad w/o SensPolicy
& \valstd{87.09}{1.95} & \valstd{83.01}{1.47}
& \valstd{85.49}{1.00} & \valstd{81.98}{1.32}
& \valstd{41.13}{0.98} & \valstd{37.49}{0.86}
& \valstd{57.06}{2.41} & \valstd{46.93}{1.95} \\
\bottomrule
\end{tabular}
\end{table}

\begin{wrapfigure}{r}{0.54\textwidth}
\vspace{-1.2em}
\centering
\includegraphics[width=0.54\textwidth]{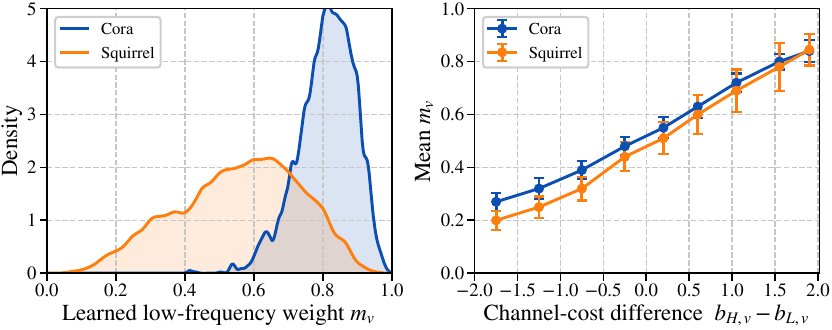}
\vspace{-1.8em}
\caption{
Policy behavior of ASPECT on clean graphs. Left: $m_v$ distributions. Right: binned $m_v$ versus $b_{H,v}-b_{L,v}$. Error bars indicate within-bin variability.
}
\label{fig:policy_behavior}
\vspace{-1.2em}
\end{wrapfigure}

\paragraph{Policy behavior.}
Figure~\ref{fig:policy_behavior} shows that the learned policy is both
node-adaptive and aligned with channel-wise evidence. The weights $m_v$ do not
collapse to a global coefficient: \texttt{Cora} is biased toward larger
low-frequency weights, while \texttt{Squirrel} has a broader distribution,
suggesting stronger node-wise heterogeneity. Moreover, the average $m_v$
increases with $b_{H,v}-b_{L,v}$, indicating that the policy assigns larger
low-frequency weights when the low-frequency channel has lower estimated cost.

\paragraph{Complexity discussion.}
Let $n=|\mathcal V|$, $m=|\mathcal E|$, $d$ be the hidden dimension, and $K$ be
the spectral filter order. The dominant cost of ASPECT comes from the
dual-channel spectral encoder, whose sparse spectral propagation scales as
$O(Kmd)$. The node-wise spectral policy and the detached policy target are
lightweight: they are node-linear and introduce no additional encoder forward
passes. ASPECT-S adds training cost mainly through perturbation generation. With
$T_{\mathrm{pert}}$ perturbation steps, perturbation interval $r$, sparse
candidate edge set size $m_{\mathrm{cand}}$, and feature perturbation budget
$q_f$, its average additional cost is
$O(\frac{T_{\mathrm{pert}}}{r}(Kmd+m_{\mathrm{cand}}+q_f+md))$, where the final
$md$ term corresponds to sparse Rayleigh computation. The additional memory is
dominated by perturbed embeddings and sparse perturbation variables,
$O(nd+m_{\mathrm{cand}}+q_f)$. For large graphs, the implementation uses sampled contrastive scoring to avoid
dense node-pair similarity matrices, preventing an additional $O(n^2)$ memory term.

\section{Conclusion}

We presented ASPECT, a spectral graph contrastive learning method that
adaptively fuses low- and high-frequency views at the node level. Motivated by
a regret lower bound showing the limitation of graph-level fusion under
heterogeneous node-wise spectral preferences, ASPECT learns a node-wise spectral
policy guided by channel-wise contrastive evidence. We also introduced ASPECT-S
as an optional stability-aware extension for settings where stability under local graph-structure or feature perturbations is desired. Experiments on homophilic and heterophilic benchmarks
show that ASPECT improves clean representation quality, while ASPECT-S further
improves performance under joint perturbations. Limitations and future
directions are discussed in Appendix~\ref{app:limitations}.

\bibliographystyle{unsrtnat}
\bibliography{aspect}

@article{surveyonhete,
  title={Graph neural networks for graphs with heterophily: A survey},
  author={Zheng, Xin and Wang, Yi and Liu, Yixin and Li, Ming and Zhang, Miao and Jin, Di and Yu, Philip S and Pan, Shirui},
  journal={arXiv preprint arXiv:2202.07082},
  year={2022}
}

@inproceedings{he2022block,
  title={Block modeling-guided graph convolutional neural networks},
  author={He, Dongxiao and Liang, Chundong and Liu, Huixin and Wen, Mingxiang and Jiao, Pengfei and Feng, Zhiyong},
  booktitle={Proceedings of the AAAI conference on artificial intelligence},
  volume={36},
  number={4},
  pages={4022--4029},
  year={2022}
}

@article{dgi,
  title={Deep graph infomax.},
  author={Velickovic, Petar and Fedus, William and Hamilton, William L and Li{\`o}, Pietro and Bengio, Yoshua and Hjelm, R Devon},
  journal={ICLR (poster)},
  volume={2},
  number={3},
  pages={4},
  year={2019}
}

@inproceedings{mvgrl,
  title={Contrastive multi-view representation learning on graphs},
  author={Hassani, Kaveh and Khasahmadi, Amir Hosein},
  booktitle={International conference on machine learning},
  pages={4116--4126},
  year={2020},
  organization={PMLR}
}

@inproceedings{gmi,
  title={Graph representation learning via graphical mutual information maximization},
  author={Peng, Zhen and Huang, Wenbing and Luo, Minnan and Zheng, Qinghua and Rong, Yu and Xu, Tingyang and Huang, Junzhou},
  booktitle={Proceedings of The Web Conference 2020},
  pages={259--270},
  year={2020}
}

@article{ggd,
  title={Rethinking and scaling up graph contrastive learning: An extremely efficient approach with group discrimination},
  author={Zheng, Yizhen and Pan, Shirui and Lee, Vincent and Zheng, Yu and Yu, Philip S},
  journal={Advances in Neural Information Processing Systems},
  volume={35},
  pages={10809--10820},
  year={2022}
}

@article{graphcl,
  title={Graph contrastive learning with augmentations},
  author={You, Yuning and Chen, Tianlong and Sui, Yongduo and Chen, Ting and Wang, Zhangyang and Shen, Yang},
  journal={Advances in neural information processing systems},
  volume={33},
  pages={5812--5823},
  year={2020}
}

@article{grace,
  title={Deep graph contrastive representation learning},
  author={Zhu, Yanqiao and Xu, Yichen and Yu, Feng and Liu, Qiang and Wu, Shu and Wang, Liang},
  journal={arXiv preprint arXiv:2006.04131},
  year={2020}
}

@inproceedings{gca,
  title={Graph contrastive learning with adaptive augmentation},
  author={Zhu, Yanqiao and Xu, Yichen and Yu, Feng and Liu, Qiang and Wu, Shu and Wang, Liang},
  booktitle={Proceedings of the web conference 2021},
  pages={2069--2080},
  year={2021}
}

@inproceedings{greet,
  title={Beyond smoothing: Unsupervised graph representation learning with edge heterophily discriminating},
  author={Liu, Yixin and Zheng, Yizhen and Zhang, Daokun and Lee, Vincent CS and Pan, Shirui},
  booktitle={Proceedings of the AAAI conference on artificial intelligence},
  volume={37},
  number={4},
  pages={4516--4524},
  year={2023}
}

@article{bgrl,
  title={Large-scale representation learning on graphs via bootstrapping},
  author={Thakoor, Shantanu and Tallec, Corentin and Azar, Mohammad Gheshlaghi and Azabou, Mehdi and Dyer, Eva L and Munos, Remi and Veli{\v{c}}kovi{\'c}, Petar and Valko, Michal},
  journal={arXiv preprint arXiv:2102.06514},
  year={2021}
}

@article{gbt,
  title={Graph barlow twins: A self-supervised representation learning framework for graphs},
  author={Bielak, Piotr and Kajdanowicz, Tomasz and Chawla, Nitesh V},
  journal={Knowledge-Based Systems},
  volume={256},
  pages={109631},
  year={2022},
  publisher={Elsevier}
}

@article{cca-ssg,
  title={From canonical correlation analysis to self-supervised graph neural networks},
  author={Zhang, Hengrui and Wu, Qitian and Yan, Junchi and Wipf, David and Yu, Philip S},
  journal={Advances in Neural Information Processing Systems},
  volume={34},
  pages={76--89},
  year={2021}
}

@article{
spgcl,
title={Single-Pass Contrastive Learning Can Work for Both Homophilic and Heterophilic Graph},
author={Haonan Wang and Jieyu Zhang and Qi Zhu and Wei Huang and Kenji Kawaguchi and Xiaokui Xiao},
journal={Transactions on Machine Learning Research},
issn={2835-8856},
year={2023},
url={https://openreview.net/forum?id=244KePn09i},
note={}
}

@inproceedings{hlcl,
author = {Yang, Wenhan and Mirzasoleiman, Baharan},
title = {Graph contrastive learning under heterophily via graph filters},
year = {2024},
publisher = {JMLR.org},
booktitle = {Proceedings of the Fortieth Conference on Uncertainty in Artificial Intelligence},
articleno = {184},
numpages = {20},
location = {Barcelona, Spain},
series = {UAI '24}
}

@inproceedings{polygcl,
title={Poly{GCL}: {GRAPH} {CONTRASTIVE} {LEARNING} via Learnable Spectral Polynomial Filters},
author={Jingyu Chen and Runlin Lei and Zhewei Wei},
booktitle={The Twelfth International Conference on Learning Representations},
year={2024},
url={https://openreview.net/forum?id=y21ZO6M86t}
}

@inproceedings{s3gcl,
  title={S3GCL: Spectral, swift, spatial graph contrastive learning},
  author={Wan, Guancheng and Tian, Yijun and Huang, Wenke and Chawla, Nitesh V and Ye, Mang},
  booktitle={Forty-first International Conference on Machine Learning},
  year={2024}
}

@article{xu2019topology,
  title={Topology attack and defense for graph neural networks: An optimization perspective},
  author={Xu, Kaidi and Chen, Hongge and Liu, Sijia and Chen, Pin-Yu and Weng, Tsui-Wei and Hong, Mingyi and Lin, Xue},
  journal={arXiv preprint arXiv:1906.04214},
  year={2019}
}

@article{ho2020contrastive,
  title={Contrastive learning with adversarial examples},
  author={Ho, Chih-Hui and Nvasconcelos, Nuno},
  journal={Advances in Neural Information Processing Systems},
  volume={33},
  pages={17081--17093},
  year={2020}
}

@article{jiang2020robust,
  title={Robust pre-training by adversarial contrastive learning},
  author={Jiang, Ziyu and Chen, Tianlong and Chen, Ting and Wang, Zhangyang},
  journal={Advances in neural information processing systems},
  volume={33},
  pages={16199--16210},
  year={2020}
}

@article{chien2020adaptive,
  title={Adaptive universal generalized pagerank graph neural network},
  author={Chien, Eli and Peng, Jianhao and Li, Pan and Milenkovic, Olgica},
  journal={arXiv preprint arXiv:2006.07988},
  year={2020}
}

@article{zhu2020beyond,
  title={Beyond homophily in graph neural networks: Current limitations and effective designs},
  author={Zhu, Jiong and Yan, Yujun and Zhao, Lingxiao and Heimann, Mark and Akoglu, Leman and Koutra, Danai},
  journal={Advances in neural information processing systems},
  volume={33},
  pages={7793--7804},
  year={2020}
}

@article{geomgcn,
  title={Geom-gcn: Geometric graph convolutional networks},
  author={Pei, Hongbin and Wei, Bingzhe and Chang, Kevin Chen-Chuan and Lei, Yu and Yang, Bo},
  journal={arXiv preprint arXiv:2002.05287},
  year={2020}
}

@article{lim2021large,
  title={Large scale learning on non-homophilous graphs: New benchmarks and strong simple methods},
  author={Lim, Derek and Hohne, Felix and Li, Xiuyu and Huang, Sijia Linda and Gupta, Vaishnavi and Bhalerao, Omkar and Lim, Ser Nam},
  journal={Advances in neural information processing systems},
  volume={34},
  pages={20887--20902},
  year={2021}
}

@article{kim2020adversarial,
  title={Adversarial self-supervised contrastive learning},
  author={Kim, Minseon and Tack, Jihoon and Hwang, Sung Ju},
  journal={Advances in neural information processing systems},
  volume={33},
  pages={2983--2994},
  year={2020}
}

@article{madry2017towards,
  title={Towards deep learning models resistant to adversarial attacks},
  author={Madry, Aleksander and Makelov, Aleksandar and Schmidt, Ludwig and Tsipras, Dimitris and Vladu, Adrian},
  journal={arXiv preprint arXiv:1706.06083},
  year={2017}
}

@article{ariel,
  title={Ariel: Adversarial graph contrastive learning},
  author={Feng, Shengyu and Jing, Baoyu and Zhu, Yada and Tong, Hanghang},
  journal={ACM Transactions on Knowledge Discovery from Data},
  volume={18},
  number={4},
  pages={1--22},
  year={2024},
  publisher={ACM New York, NY}
}

@inproceedings{rdgi,
  title={Unsupervised adversarially robust representation learning on graphs},
  author={Xu, Jiarong and Yang, Yang and Chen, Junru and Jiang, Xin and Wang, Chunping and Lu, Jiangang and Sun, Yizhou},
  booktitle={Proceedings of the AAAI conference on artificial intelligence},
  volume={36},
  number={4},
  pages={4290--4298},
  year={2022}
}

@article{suresh2021adversarial,
  title={Adversarial graph augmentation to improve graph contrastive learning},
  author={Suresh, Susheel and Li, Pan and Hao, Cong and Neville, Jennifer},
  journal={Advances in Neural Information Processing Systems},
  volume={34},
  pages={15920--15933},
  year={2021}
}

@inproceedings{
metattack,
title={Adversarial Attacks on Graph Neural Networks via Meta Learning},
author={Daniel Zügner and Stephan Günnemann},
booktitle={International Conference on Learning Representations},
year={2019},
url={https://openreview.net/forum?id=Bylnx209YX},
}

@article{sen2008collective,
  title={Collective classification in network data},
  author={Sen, Prithviraj and Namata, Galileo and Bilgic, Mustafa and Getoor, Lise and Galligher, Brian and Eliassi-Rad, Tina},
  journal={AI magazine},
  volume={29},
  number={3},
  pages={93--93},
  year={2008}
}

@article{rozemberczki2021multi,
  title={Multi-scale attributed node embedding},
  author={Rozemberczki, Benedek and Allen, Carl and Sarkar, Rik},
  journal={Journal of Complex Networks},
  volume={9},
  number={2},
  pages={cnab014},
  year={2021},
  publisher={Oxford University Press}
}

@article{huang2024dpgcl,
  title={DPGCL: Dual pass filtering based graph contrastive learning},
  author={Huang, Rui and Li, Ping and Zhang, Kai},
  journal={Neural Networks},
  volume={179},
  pages={106517},
  year={2024},
  publisher={Elsevier}
}

@inproceedings{zou2025loha,
  title={Loha: Direct graph spectral contrastive learning between low-pass and high-pass views},
  author={Zou, Ziyun and Jiang, Yinghui and Shen, Lian and Liu, Juan and Liu, Xiangrong},
  booktitle={Proceedings of the AAAI Conference on Artificial Intelligence},
  volume={39},
  number={12},
  pages={13492--13500},
  year={2025}
}

@inproceedings{zhu2019robust,
  title={Robust graph convolutional networks against adversarial attacks},
  author={Zhu, Dingyuan and Zhang, Ziwei and Cui, Peng and Zhu, Wenwu},
  booktitle={Proceedings of the 25th ACM SIGKDD international conference on knowledge discovery \& data mining},
  pages={1399--1407},
  year={2019}
}

@article{song2024two,
  title={Two-level adversarial attacks for graph neural networks},
  author={Song, Chengxi and Niu, Lingfeng and Lei, Minglong},
  journal={Information Sciences},
  volume={654},
  pages={119877},
  year={2024},
  publisher={Elsevier}
}

@inproceedings{jin2020graph,
  title={Graph structure learning for robust graph neural networks},
  author={Jin, Wei and Ma, Yao and Liu, Xiaorui and Tang, Xianfeng and Wang, Suhang and Tang, Jiliang},
  booktitle={Proceedings of the 26th ACM SIGKDD international conference on knowledge discovery \& data mining},
  pages={66--74},
  year={2020}
}

@inproceedings{lin2022graph,
  title={Graph structural attack by perturbing spectral distance},
  author={Lin, Lu and Blaser, Ethan and Wang, Hongning},
  booktitle={Proceedings of the 28th ACM SIGKDD Conference on Knowledge Discovery and Data Mining},
  pages={989--998},
  year={2022}
}

@inproceedings{nettack,
  title     = {Adversarial Attacks on Neural Networks for Graph Data},
  author    = {Z{\"u}gner, Daniel and Akbarnejad, Amir and G{\"u}nnemann, Stephan},
  booktitle = {Proceedings of the 24th ACM SIGKDD International Conference on Knowledge Discovery and Data Mining},
  pages     = {2847--2856},
  year      = {2018},
  publisher = {ACM},
  doi       = {10.1145/3219819.3220078}
}

@inproceedings{gcc,
  title     = {GCC: Graph Contrastive Coding for Graph Neural Network Pre-Training},
  author    = {Qiu, Jiezhong and Chen, Qibin and Dong, Yuxiao and Zhang, Jing and Yang, Hongxia and Ding, Ming and Wang, Kuansan and Tang, Jie},
  booktitle = {Proceedings of the 26th ACM SIGKDD International Conference on Knowledge Discovery and Data Mining},
  pages     = {1150--1160},
  year      = {2020},
  publisher = {ACM},
  doi       = {10.1145/3394486.3403168}
}

@inproceedings{graphmae,
  title     = {GraphMAE: Self-Supervised Masked Graph Autoencoders},
  author    = {Hou, Zhenyu and Liu, Xiao and Cen, Yukuo and Dong, Yuxiao and Tang, Jie},
  booktitle = {Proceedings of the 28th ACM SIGKDD Conference on Knowledge Discovery and Data Mining},
  pages     = {594--604},
  year      = {2022},
  publisher = {ACM},
  doi       = {10.1145/3534678.3539321}
}

@inproceedings{graphmae2,
  title     = {GraphMAE2: A Decoding-Enhanced Masked Self-Supervised Graph Learner},
  author    = {Hou, Zhenyu and He, Yufei and Cen, Yukuo and Liu, Xiao and Dong, Yuxiao and Kharlamov, Evgeny and Tang, Jie},
  booktitle = {Proceedings of the ACM Web Conference 2023},
  pages     = {737--746},
  year      = {2023},
  publisher = {ACM},
  doi       = {10.1145/3543507.3583379}
}

@inproceedings{fagcn,
  title     = {Beyond Low-frequency Information in Graph Convolutional Networks},
  author    = {Bo, Deyu and Wang, Xiao and Shi, Chuan and Shen, Huawei},
  booktitle = {Proceedings of the AAAI Conference on Artificial Intelligence},
  year      = {2021},
  publisher = {AAAI Press}
}

@inproceedings{gnn_guard,
  title     = {GNNGuard: Defending Graph Neural Networks against Adversarial Attacks},
  author    = {Zhang, Xiang and Zitnik, Marinka},
  booktitle = {Advances in Neural Information Processing Systems},
  year      = {2020}
}

@inproceedings{graph_cert,
  title     = {Certifiable Robustness to Graph Perturbations},
  author    = {Bojchevski, Aleksandar and G{\"u}nnemann, Stephan},
  booktitle = {Advances in Neural Information Processing Systems},
  year      = {2019}
}

%%%%%%%%%%%%%%%%%%%%%%%%%%%%%%%%%%%%%%%%%%%%%%%%%%%%%%%%%%%%
\appendix
\section{Related Work}
\label{app:related_work}

\subsection{Self-Supervised Graph Representation Learning}
Self-supervised graph representation learning has been extensively studied to
mitigate label scarcity on graphs. Early approaches largely follow
mutual-information maximization and contrastive paradigms, such as
DGI~\citep{dgi}, MVGRL~\citep{mvgrl}, and GMI~\citep{gmi}. Subsequent methods
emphasize augmentation-driven contrastive objectives, including
GraphCL~\citep{graphcl}, GRACE~\citep{grace}, and adaptive augmentation in
GCA~\citep{gca}, while other works improve scalability and efficiency via
alternative discrimination schemes~\citep{ggd}. Beyond contrastive learning,
non-contrastive objectives based on bootstrapping or redundancy reduction, such
as BGRL~\citep{bgrl}, Graph Barlow Twins~\citep{gbt}, and
CCA-SSG~\citep{cca-ssg}, alleviate the reliance on negative samples and
manually designed negative pairs.

Generative pretext tasks have also been widely explored on graphs. Masked graph
autoencoders, such as GraphMAE~\citep{graphmae} and
GraphMAE2~\citep{graphmae2}, reconstruct masked node attributes or structures
and demonstrate strong representation quality. Cross-graph pretraining
frameworks such as GCC~\citep{gcc} learn transferable structural patterns via
subgraph-level instance discrimination. These methods provide strong general
graph SSL baselines. Our work is complementary: rather than introducing a new
pretext task, ASPECT focuses on how low- and high-frequency graph views should
be adaptively fused when node-wise spectral preferences are heterogeneous.

\subsection{Heterophily, Mixed Graphs, and Frequency-Aware Learning}
A key challenge in graph learning is heterophily, where adjacent nodes may have
dissimilar labels or features. Classical message-passing GNNs can degrade under
heterophily due to over-smoothing and the low-pass nature of neighborhood
aggregation~\citep{zhu2020beyond,lim2021large}. Recent surveys summarize this
line of work and categorize architectural remedies for heterophilous
graphs~\citep{surveyonhete}. Representative heterophily-oriented methods exploit
structural patterns beyond immediate neighborhoods, such as block modeling
guidance~\citep{he2022block}, or explicitly strengthen heterophily
discrimination, such as GREET~\citep{greet}.

From a graph signal processing perspective, homophilic and heterophilic regions
can benefit from different spectral components. Frequency-adaptive GNNs, such as
FAGCN~\citep{fagcn}, introduce mechanisms to combine low- and high-frequency
signals. In self-supervised learning, spectral or frequency-aware contrastive
methods construct complementary views using polynomial spectral filters in
PolyGCL~\citep{polygcl}, hybrid spectral-spatial pipelines in
S3GCL~\citep{s3gcl}, and heterophily-aware dual filtering in
HLCL~\citep{hlcl}. More recent methods further explore explicit low-/high-pass
view contrast~\citep{zou2025loha} or multi-pass filtering
designs~\citep{huang2024dpgcl}. However, existing frequency-aware GCL methods
often treat the fusion of spectral views as a graph-level, fixed, or otherwise
coarse design choice. ASPECT differs by parameterizing the fusion of low- and
high-frequency views with a node-wise spectral policy, allowing different nodes
to use different spectral mixtures.

\subsection{Graph Perturbations and Robust Graph SSL}
Graph neural networks can be sensitive to perturbations of graph structure and
node features. Classic targeted attacks include Nettack~\citep{nettack}, while
meta-learning-based poisoning attacks such as Metattack~\citep{metattack}
perturb graph structure to degrade downstream performance. Further studies
analyze graph attacks and defenses from optimization and topology perspectives
\citep{xu2019topology}, and propose additional perturbation objectives,
including spectral-distance-driven perturbations~\citep{lin2022graph} and
multi-level attack strategies~\citep{song2024two}. Robust graph learning methods
include robust GCN variants~\citep{zhu2019robust}, graph structure learning for
denoising~\citep{jin2020graph}, and edge reweighting or pruning mechanisms such
as GNNGuard~\citep{gnn_guard}. Certified robustness methods such as
Graph-Cert~\citep{graph_cert} provide worst-case guarantees under graph
perturbations for certain model classes.

Robustness has also been studied in self-supervised and contrastive learning.
General adversarial contrastive learning principles
\citep{kim2020adversarial,ho2020contrastive,jiang2020robust,madry2017towards}
have inspired graph adaptations. In graph SSL, adversarial augmentation and
robust objectives have been explored in AD-GCL~\citep{suresh2021adversarial},
RDGI~\citep{rdgi}, and ARIEL~\citep{ariel}. These works primarily aim to improve
robustness through generic perturbation-aware training objectives. In contrast,
ASPECT differs from robustness-centered methods: its core contribution is node-level adaptive spectral fusion. ASPECT-S is an optional
stability-aware extension that uses generated graph-structure and feature
perturbations to obtain empirical channel-wise sensitivity estimates when
robustness to local perturbations is desired.

\subsection{Positioning of ASPECT}
ASPECT is positioned at the intersection of spectral graph contrastive learning
and node-adaptive representation learning. Existing spectral GCL methods show
the value of constructing low- and high-frequency views, but their fusion is
often graph-level or node-agnostic. Our theory shows that such global fusion can
be structurally suboptimal on mixed graphs with heterogeneous node-wise spectral
preferences. Motivated by this observation, ASPECT adaptively fuses
low- and high-frequency views at the node level through a node-wise spectral
policy, and trains this policy with a utility-aware spectral policy regularizer
based on channel-wise contrastive evidence.

ASPECT-S extends this core design for settings where stability under local
perturbations is desired. It generates graph-structure and feature perturbations
to obtain empirical channel-wise sensitivity estimates, while a Rayleigh-based
spectral search bias encourages informative perturbations during the
perturbation-generation step. Compared with robustness-centered methods, ASPECT-S is used as an optional stability-aware training signal, and it treats channel reliability as node- and perturbation-dependent.

\section{Detailed Theory}
\label{app:theory}

This appendix provides the formal assumptions and proofs for
Section~\ref{sec:theory}. The analysis is conducted at the
representation/fusion level. Its scope is to justify node-wise spectral fusion and motivate ASPECT-S as an optional stability-aware extension. It treats channel sensitivity as node- and perturbation-dependent, and analyzes the structural limitation of graph-level fusion.

\subsection{Notation and Scope}
\label{app:theory_scope}

For each node $v$, let $z_{L,v},z_{H,v}\in\mathbb{R}^d$ denote the low- and
high-frequency embeddings produced by the encoder. For a fusion coefficient
$m\in[0,1]$, define
\[
z_v(m) \triangleq m z_{L,v}+(1-m)z_{H,v}.
\]
Larger $m$ corresponds to stronger reliance on the low-frequency channel. In
the model, the learned gate outputs a node-wise coefficient $m_v$ from the
spectral views; in the analysis below, $m$ is treated as fixed when studying a
single node.

Let $T$ denote the randomness in the contrastive objective, such as
positive-view and negative-sample selection. The standard contrastive surrogate
risk for node $v$ is
\[
\mathcal{E}_v(m)
\triangleq
\mathbb{E}_T[\ell(z_v(m);T)].
\]
For stability analysis, let $\mathcal{Q}_v$ be a set of allowable local
graph-structure and/or feature perturbations around node $v$. For
$\delta\in\mathcal{Q}_v$, let $z_v^\delta(m)$ denote the fused representation
under perturbation $\delta$. The perturbation-aware risk and sensitivity are
\[
R_v^{\mathcal Q}(m)
\triangleq
\sup_{\delta\in\mathcal Q_v}
\mathbb{E}_T[\ell(z_v^\delta(m);T)],
\qquad
S_v(m)
\triangleq
\sup_{\delta\in\mathcal Q_v}
\|z_v^\delta(m)-z_v(m)\|_2 .
\]

The main
result shows that graph-level fusion is structurally limited under
heterogeneous node-wise spectral preferences. The stability analysis then
motivates ASPECT-S as an optional extension when robustness to local
perturbations is desired.

\subsection{Formal Assumptions for Global Fusion Regret}
\label{app:global_assumptions}

We first formalize the assumptions used in
Theorem~\ref{thm:static_regret}. Consider graph-level fusion with a single
coefficient $\alpha\in[0,1]$ shared by all nodes:
\[
z_v(\alpha)\triangleq \alpha z_{L,v}+(1-\alpha)z_{H,v},
\qquad
R_v(\alpha)\triangleq \mathbb{E}_T[\ell(z_v(\alpha);T)].
\]

\begin{assumption}[Continuity and minimizer existence]
\label{ass:continuity_minimizer}
For every node $v$, the function $R_v(\alpha)$ is continuous on the compact
interval $[0,1]$. Therefore, the optimal fusion set
\[
A_v^\star
\triangleq
\arg\min_{\alpha\in[0,1]}R_v(\alpha)
\]
is nonempty and compact. Let
\[
R_v^\star
\triangleq
\min_{\alpha\in[0,1]}R_v(\alpha).
\]
\end{assumption}

\begin{assumption}[Quadratic growth / error bound]
\label{ass:quadratic_growth}
There exists $\mu>0$ such that for every node $v$ and every
$\alpha\in[0,1]$,
\[
R_v(\alpha)-R_v^\star
\ge
\frac{\mu}{2}
\operatorname{dist}^2(\alpha,A_v^\star),
\]
where
\[
\operatorname{dist}(\alpha,A_v^\star)
\triangleq
\inf_{\beta\in A_v^\star}|\alpha-\beta|.
\]
\end{assumption}

\begin{assumption}[Separated node-wise spectral-preference subpopulations]
\label{ass:separated_preferences}
There exist two disjoint node subsets $V^-,V^+\subseteq V$, not necessarily
covering all nodes, with
\[
p_- \triangleq \frac{|V^-|}{|V|}>0,
\qquad
p_+ \triangleq \frac{|V^+|}{|V|}>0,
\]
and constants $0\le a_-<a_+\le 1$ such that
\[
A_v^\star\subseteq[0,a_-],\quad \forall v\in V^-,
\qquad
A_v^\star\subseteq[a_+,1],\quad \forall v\in V^+.
\]
Let $\Delta\triangleq a_+-a_->0$.
\end{assumption}

Assumption~\ref{ass:separated_preferences} only requires two separated
subpopulations with nonzero mass. It does not require all nodes to belong to
$V^-\cup V^+$; nodes outside these subsets are allowed to have arbitrary
optimal fusion sets.

\subsection{Proof of Theorem~\ref{thm:static_regret}}
\label{app:proof_global_regret}

\begin{proof}
Recall the definitions
\[
R_{\mathrm{global}}
\triangleq
\min_{\alpha\in[0,1]}
\frac{1}{|V|}\sum_{v\in V}R_v(\alpha),
\qquad
R_{\mathrm{oracle}}
\triangleq
\frac{1}{|V|}\sum_{v\in V}
\min_{\alpha_v\in[0,1]}R_v(\alpha_v),
\]
and
\[
\mathrm{Regret}
\triangleq
R_{\mathrm{global}}-R_{\mathrm{oracle}}.
\]
Since
\[
R_{\mathrm{oracle}}
=
\frac{1}{|V|}\sum_{v\in V}R_v^\star,
\]
we have
\[
\mathrm{Regret}
=
\min_{\alpha\in[0,1]}
\frac{1}{|V|}\sum_{v\in V}
\big(R_v(\alpha)-R_v^\star\big).
\]
By Assumption~\ref{ass:quadratic_growth},
\[
\mathrm{Regret}
\ge
\frac{\mu}{2}
\min_{\alpha\in[0,1]}
\frac{1}{|V|}
\sum_{v\in V}
\operatorname{dist}^2(\alpha,A_v^\star).
\]

For $v\in V^-$, Assumption~\ref{ass:separated_preferences} gives
$A_v^\star\subseteq[0,a_-]$. Therefore,
\[
\operatorname{dist}(\alpha,A_v^\star)
\ge
(\alpha-a_-)_+,
\]
where $(x)_+\triangleq\max\{x,0\}$. Similarly, for $v\in V^+$,
$A_v^\star\subseteq[a_+,1]$, and hence
\[
\operatorname{dist}(\alpha,A_v^\star)
\ge
(a_+-\alpha)_+.
\]
All nodes outside $V^-\cup V^+$ contribute nonnegative terms, so dropping them
preserves a valid lower bound:
\[
\mathrm{Regret}
\ge
\frac{\mu}{2}
\min_{\alpha\in[0,1]}
\left[
p_-(\alpha-a_-)_+^2
+
p_+(a_+-\alpha)_+^2
\right].
\]
It remains to solve the one-dimensional minimization problem
\[
\min_{\alpha\in[0,1]}
\left[
p_-(\alpha-a_-)_+^2
+
p_+(a_+-\alpha)_+^2
\right].
\]

First consider $\alpha\in[a_-,a_+]$. The objective becomes
\[
g(\alpha)
=
p_-(\alpha-a_-)^2+p_+(a_+-\alpha)^2.
\]
The derivative is
\[
g'(\alpha)
=
2p_-(\alpha-a_-)-2p_+(a_+-\alpha).
\]
Setting $g'(\alpha)=0$ gives
\[
\alpha^\star
=
\frac{p_-a_-+p_+a_+}{p_-+p_+}.
\]
Since $a_-<a_+$ and $p_-,p_+>0$, this minimizer lies in
$[a_-,a_+]$. Moreover,
\[
\alpha^\star-a_-
=
\frac{p_+}{p_-+p_+}\Delta,
\qquad
a_+-\alpha^\star
=
\frac{p_-}{p_-+p_+}\Delta.
\]
Thus,
\[
g(\alpha^\star)
=
p_-
\left(
\frac{p_+}{p_-+p_+}\Delta
\right)^2
+
p_+
\left(
\frac{p_-}{p_-+p_+}\Delta
\right)^2
=
\frac{p_-p_+}{p_-+p_+}\Delta^2.
\]

Now consider $\alpha<a_-$. Then
\[
p_-(\alpha-a_-)_+^2+p_+(a_+-\alpha)_+^2
=
p_+(a_+-\alpha)^2
\ge
p_+\Delta^2.
\]
Since
\[
p_+\Delta^2
\ge
\frac{p_-p_+}{p_-+p_+}\Delta^2,
\]
the lower bound is no smaller outside the interval on the left. Similarly, if
$\alpha>a_+$, then
\[
p_-(\alpha-a_-)_+^2+p_+(a_+-\alpha)_+^2
=
p_-(\alpha-a_-)^2
\ge
p_-\Delta^2
\ge
\frac{p_-p_+}{p_-+p_+}\Delta^2.
\]
Therefore,
\[
\min_{\alpha\in[0,1]}
\left[
p_-(\alpha-a_-)_+^2
+
p_+(a_+-\alpha)_+^2
\right]
=
\frac{p_-p_+}{p_-+p_+}\Delta^2.
\]
Combining this with the previous regret lower bound yields
\[
\mathrm{Regret}
\ge
\frac{\mu}{2}
\frac{p_-p_+}{p_-+p_+}
\Delta^2.
\]

When $V^-\cup V^+=V$, let $r=|V^+|/|V|$. Then $p_+=r$ and
$p_-=1-r$, so
\[
\frac{p_-p_+}{p_-+p_+}=r(1-r),
\]
and the bound reduces to
\[
\mathrm{Regret}
\ge
\frac{\mu}{2}r(1-r)\Delta^2.
\]
\end{proof}

\subsection{Perturbation-Aware Risk Bound}
\label{app:robust_bound}

We next prove Theorem~\ref{thm:robust_upper_bound}. The only regularity
condition needed is local Lipschitzness of the contrastive surrogate on the
region visited during training.

\begin{assumption}[Local Lipschitz surrogate]
\label{ass:lipschitz_appendix}
For every realization of $T$, the loss $\ell(\cdot;T)$ is $L$-Lipschitz with
respect to its embedding argument on the compact region $\mathcal Z$ visited
during training:
\[
|\ell(u;T)-\ell(v;T)|
\le
L\|u-v\|_2,
\qquad
\forall u,v\in\mathcal Z.
\]
\end{assumption}

\begin{proof}[Proof of Theorem~\ref{thm:robust_upper_bound}]
Fix $v$, $m\in[0,1]$, and $\delta\in\mathcal Q_v$. By
Assumption~\ref{ass:lipschitz_appendix}, for every realization of $T$,
\[
\ell(z_v^\delta(m);T)
\le
\ell(z_v(m);T)
+
L\|z_v^\delta(m)-z_v(m)\|_2.
\]
Taking expectation over $T$ gives
\[
\mathbb{E}_T[\ell(z_v^\delta(m);T)]
\le
\mathcal E_v(m)
+
L\|z_v^\delta(m)-z_v(m)\|_2.
\]
Taking the supremum over $\delta\in\mathcal Q_v$ yields
\[
R_v^{\mathcal Q}(m)
=
\sup_{\delta\in\mathcal Q_v}
\mathbb{E}_T[\ell(z_v^\delta(m);T)]
\le
\mathcal E_v(m)
+
L
\sup_{\delta\in\mathcal Q_v}
\|z_v^\delta(m)-z_v(m)\|_2.
\]
By the definition of $S_v(m)$,
\[
R_v^{\mathcal Q}(m)
\le
\mathcal E_v(m)+L S_v(m).
\]
\end{proof}

\subsection{Lipschitzness of InfoNCE}
\label{app:infonce_lipschitz}

We briefly justify why Assumption~\ref{ass:lipschitz_appendix} is mild for the
normalized InfoNCE loss used in contrastive learning. Consider
\[
\ell_{\mathrm{NCE}}(u,v)
=
-\log
\frac{\exp(u^\top v/\tau)}
{\sum_{k\in\mathcal N}\exp(u^\top k/\tau)},
\]
where $\tau>0$ is the temperature, $v$ is the positive key, and
$\mathcal N$ contains the positive key and negative keys. Assume all keys are $\ell_2$-normalized or bounded by one, so
$\|k\|_2\le 1$ for all $k\in\mathcal N$ and $\|v\|_2\le 1$. Let
\[
p_k
=
\frac{\exp(u^\top k/\tau)}
{\sum_{j\in\mathcal N}\exp(u^\top j/\tau)}.
\]
Then
\[
\nabla_u \ell_{\mathrm{NCE}}(u,v)
=
\frac{1}{\tau}
\left(
\sum_{k\in\mathcal N}p_k k - v
\right).
\]
Using the triangle inequality,
\[
\|\nabla_u \ell_{\mathrm{NCE}}(u,v)\|_2
\le
\frac{1}{\tau}
\left(
\left\|\sum_{k\in\mathcal N}p_k k\right\|_2+\|v\|_2
\right).
\]
Since $\{p_k\}$ is a probability distribution and $\|k\|_2\le 1$,
\[
\left\|\sum_{k\in\mathcal N}p_k k\right\|_2
\le
\sum_{k\in\mathcal N}p_k\|k\|_2
\le
1.
\]
Therefore,
\[
\|\nabla_u \ell_{\mathrm{NCE}}(u,v)\|_2
\le
\frac{2}{\tau}.
\]
Thus, for bounded keys and fixed temperature, the InfoNCE loss is Lipschitz
with respect to the query embedding with constant at most $2/\tau$.
This calculation treats the fused representation as the query while the sampled
positive and negative keys are fixed by $T$. If the same representation also
appears as a key for other samples, Lipschitzness still holds on compact
normalized domains, with a constant depending on the batch size and temperature.
If additional projection or normalization layers are included, the same
argument yields local Lipschitzness on compact regions visited during training.

\subsection{Standard-Risk-Optimal Fusion Can Be Stability-Suboptimal}
\label{app:clean_suboptimal}

Theorem~\ref{thm:robust_upper_bound} shows that the perturbation-aware upper
bound contains both the standard contrastive risk and the sensitivity term. The
following result gives a sufficient condition under which a coefficient that is
optimal for the standard contrastive risk is suboptimal for this upper bound.

\begin{proposition}[Standard-risk-optimal fusion can be stability-suboptimal]
\label{prop:clean_suboptimal_appendix}
Let
\[
m_c\in\arg\min_{m\in[0,1]}\mathcal E_v(m).
\]
If there exists $\widetilde m\in[0,1]$ such that
\[
\mathcal E_v(\widetilde m)-\mathcal E_v(m_c)
<
L\big(S_v(m_c)-S_v(\widetilde m)\big),
\]
then
\[
\mathcal E_v(\widetilde m)+L S_v(\widetilde m)
<
\mathcal E_v(m_c)+L S_v(m_c).
\]
\end{proposition}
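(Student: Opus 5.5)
The claim is a one-line rearrangement of the hypothesis. I will add $L\,S_v(\widetilde m)$ to both sides of the assumed strict inequality. The hypothesis reads
\[
\mathcal E_v(\widetilde m)-\mathcal E_v(m_c)
<
L\,S_v(m_c)-L\,S_v(\widetilde m).
\]
Moving $\mathcal E_v(m_c)$ to the right and $L\,S_v(\widetilde m)$ to the left gives $\mathcal E_v(\widetilde m)+L S_v(\widetilde m)<\mathcal E_v(m_c)+L S_v(m_c)$. Adding a common real number preserves strict inequality, so this is the conclusion.

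The only point requiring care is that every quantity is a finite real number, so the rearrangement is legitimate. For $\mathcal E_v$, finiteness follows because $\ell(\cdot;T)$ is Lipschitz on a compact region, hence bounded there, so the expectation is finite. For $S_v$, I will note that the sensitivity is a supremum of norms. It could in principle be $+\infty$, but the hypothesis itself is an inequality between real numbers, which implicitly requires $S_v(m_c)$ and $S_v(\widetilde m)$ to be finite. Alternatively, one can assume $\mathcal Q_v$ yields embeddings in the compact region $\mathcal Z$ of Assumption~\ref{ass:lipschitz_appendix}, which makes both suprema bounded by $\operatorname{diam}(\mathcal Z)$.

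Finally, I will remark on the interpretation. Since $m_c$ minimizes $\mathcal E_v$, the left-hand side $\mathcal E_v(\widetilde m)-\mathcal E_v(m_c)$ of the hypothesis is nonnegative. The condition can therefore only hold when $S_v(\widetilde m)<S_v(m_c)$. It says that the standard-risk advantage of $m_c$ is outweighed by its larger sensitivity penalty, so $m_c$ does not minimize the perturbation-aware upper bound of Theorem~\ref{thm:robust_upper_bound}. I expect no real obstacle; the finiteness remark is the only step to state explicitly.
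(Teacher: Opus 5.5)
Your proposal is correct and matches the paper's proof, which is the same one-line rearrangement of the assumed strict inequality (adding $L\,S_v(\widetilde m)+\mathcal E_v(m_c)$ to both sides). Your finiteness remark is extra care the paper leaves implicit, and your own observation already covers it: the hypothesis, as an inequality between real numbers, presupposes finiteness. The Lipschitz route for $\mathcal E_v$ is the looser justification, since Lipschitzness on $\mathcal Z$ bounds $\ell(\cdot;T)$ only up to a $T$-dependent offset.
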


\begin{proof}
Starting from the assumed inequality,
\[
\mathcal E_v(\widetilde m)-\mathcal E_v(m_c)
<
L\big(S_v(m_c)-S_v(\widetilde m)\big).
\]
Rearranging terms gives
\[
\mathcal E_v(\widetilde m)+L S_v(\widetilde m)
<
\mathcal E_v(m_c)+L S_v(m_c).
\]
Thus, although $m_c$ minimizes the standard contrastive risk
$\mathcal E_v(m)$, it does not minimize the perturbation-aware upper bound.
\end{proof}

This proposition is only a sufficient condition. It does not claim that
standard-risk minimization always fails. It states that when the reduction in
sensitivity is large enough to offset a small increase in standard contrastive
risk, a different coefficient can be preferable for the perturbation-aware upper
bound.

\paragraph{A simple gap construction.}
Let $\mathcal E_v(m_c)=0$, $S_v(m_c)=M$, and suppose there exists
$\widetilde m$ with $\mathcal E_v(\widetilde m)=\epsilon>0$ and
$S_v(\widetilde m)=0$. Then $m_c$ is standard-risk optimal, but the difference
between the two perturbation-aware upper bounds is
\[
\big(\mathcal E_v(m_c)+L S_v(m_c)\big)
-
\big(\mathcal E_v(\widetilde m)+L S_v(\widetilde m)\big)
=
LM-\epsilon.
\]
For fixed $\epsilon$, this gap can be made arbitrarily large as $M$ increases.
This construction illustrates the role of sensitivity, without implying that
such a gap must occur in every graph.

\subsection{DRO Surrogate for Local Perturbation Shifts}
\label{app:dro_surrogate}

The main text uses the following elementary observation to justify worst-case
local perturbation objectives as conservative surrogates for unknown local
perturbation shifts.

\begin{proposition}[Worst-case perturbation risk upper-bounds local shifts]
\label{prop:dro_surrogate_appendix}
Let $\mathcal P_v$ be any distribution over local perturbations whose support is
contained in $\mathcal Q_v$. Then, for every node $v$ and every fixed
$m\in[0,1]$,
\begin{equation}
\mathbb{E}_{\delta\sim\mathcal{P}_v}
\mathbb{E}_{T}\big[\ell(z_v^\delta(m);T)\big]
\le
\sup_{\delta\in\mathcal{Q}_v}
\mathbb{E}_{T}\big[\ell(z_v^\delta(m);T)\big]
=
R_v^{\mathcal{Q}}(m).
\label{eq:dro_surrogate}
\end{equation}
\end{proposition}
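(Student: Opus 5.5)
The plan is a pointwise domination followed by monotonicity of expectation. Fix a node $v$ and a coefficient $m\in[0,1]$, and define the per-perturbation risk
\[
f_v(\delta)\triangleq \mathbb{E}_T\big[\ell(z_v^\delta(m);T)\big],\qquad \delta\in\mathcal Q_v,
\]
so that $R_v^{\mathcal Q}(m)=\sup_{\delta\in\mathcal Q_v} f_v(\delta)$ by definition. The left-hand side of \eqref{eq:dro_surrogate} is $\mathbb{E}_{\delta\sim\mathcal P_v}[f_v(\delta)]$, and the claimed inequality will follow by integrating a pointwise bound.

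\textbf{Step 1 (pointwise bound).} Since $\mathcal P_v$ is supported in $\mathcal Q_v$, we have $\mathcal P_v(\mathcal Q_v)=1$. For every $\delta\in\mathcal Q_v$, the definition of the supremum gives $f_v(\delta)\le R_v^{\mathcal Q}(m)$. Hence this inequality holds $\mathcal P_v$-almost surely.

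\textbf{Step 2 (integrate).} Taking expectation over $\delta\sim\mathcal P_v$ and using monotonicity of the integral gives
\[
\mathbb{E}_{\delta\sim\mathcal P_v} f_v(\delta)\le \mathbb{E}_{\delta\sim\mathcal P_v}\big[R_v^{\mathcal Q}(m)\big]=R_v^{\mathcal Q}(m),
\]
since $R_v^{\mathcal Q}(m)$ does not depend on $\delta$ and $\mathcal P_v$ has total mass one. The final equality in \eqref{eq:dro_surrogate} is simply the definition in \eqref{eq:robust_risk_and_sensitivity}.

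\textbf{Main obstacle (measurability).} The only non-routine point is ensuring that the left-hand side is well defined. This requires $\delta\mapsto f_v(\delta)$ to be measurable, and $f_v$ to be integrable or at least bounded below. I would assume measurability as part of what ``distribution over local perturbations'' means. Bounded-below is mild: the loss $\ell$ is nonnegative for InfoNCE, or continuous on the compact region $\mathcal Z$ of Assumption~\ref{ass:lipschitz_appendix}. Under these conditions the expectation exists in $(-\infty,+\infty]$, and the bound holds even when $R_v^{\mathcal Q}(m)=+\infty$, in which case it is trivial. No Lipschitz or quadratic-growth assumption is needed, so earlier results like Theorem~\ref{thm:robust_upper_bound} are not invoked; one may remark afterwards that chaining with it yields $\mathbb{E}_{\mathcal P_v}\mathbb{E}_T[\ell(z_v^\delta(m);T)]\le \mathcal E_v(m)+LS_v(m)$.
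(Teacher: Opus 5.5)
Your proposal is correct and follows essentially the same argument as the paper: you bound $\mathbb{E}_T[\ell(z_v^\delta(m);T)]$ by $R_v^{\mathcal Q}(m)$ for every $\delta$ in the support of $\mathcal P_v$ (hence in $\mathcal Q_v$), then integrate over $\delta\sim\mathcal P_v$. Your remarks on measurability and lower-boundedness of $\delta\mapsto\mathbb{E}_T[\ell(z_v^\delta(m);T)]$ add rigor that the paper's proof leaves implicit.
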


\begin{proof}
Let $\mathcal P_v$ be any distribution over local perturbations whose support is
contained in $\mathcal Q_v$. For any
$\delta\in\operatorname{supp}(\mathcal P_v)$, we have
$\delta\in\mathcal Q_v$. Therefore,
\[
\mathbb E_T[\ell(z_v^\delta(m);T)]
\le
\sup_{\delta'\in\mathcal Q_v}
\mathbb E_T[\ell(z_v^{\delta'}(m);T)]
=
R_v^{\mathcal Q}(m).
\]
Taking expectation over $\delta\sim\mathcal P_v$ preserves the inequality:
\[
\mathbb E_{\delta\sim\mathcal P_v}
\mathbb E_T[\ell(z_v^\delta(m);T)]
\le
R_v^{\mathcal Q}(m).
\]
\end{proof}

This proposition motivates idealized worst-case perturbation training as a
distributionally robust upper-bound surrogate for unknown local perturbation
shifts supported on $\mathcal Q_v$. ASPECT-S approximates this surrogate using
generated perturbations. The proposition does not prove that ASPECT-S
necessarily improves robustness; it only explains why a worst-case
perturbation objective is a principled stability-aware training signal when
robustness to local perturbations is desired.

\subsection{Channel-Wise Sensitivity Tradeoff}
\label{app:channel_tradeoff}

Define the channel-wise sensitivities
\[
d_{L,v}
\triangleq
\sup_{\delta\in\mathcal Q_v}
\|z_{L,v}^\delta-z_{L,v}\|_2,
\qquad
d_{H,v}
\triangleq
\sup_{\delta\in\mathcal Q_v}
\|z_{H,v}^\delta-z_{H,v}\|_2.
\]
We prove the channel-wise bound stated in
Eq.~\eqref{eq:channel_tradeoff_bound}.

\begin{proof}
For a fixed $m\in[0,1]$,
\[
z_v^\delta(m)-z_v(m)
=
m(z_{L,v}^\delta-z_{L,v})
+
(1-m)(z_{H,v}^\delta-z_{H,v}).
\]
Since $m\in[0,1]$, the triangle inequality gives
\[
\|z_v^\delta(m)-z_v(m)\|_2
\le
m\|z_{L,v}^\delta-z_{L,v}\|_2
+
(1-m)\|z_{H,v}^\delta-z_{H,v}\|_2.
\]
Taking the supremum over $\delta\in\mathcal Q_v$,
\[
S_v(m)
=
\sup_{\delta\in\mathcal Q_v}
\|z_v^\delta(m)-z_v(m)\|_2
\le
m d_{L,v}+(1-m)d_{H,v}.
\]
Combining this inequality with Theorem~\ref{thm:robust_upper_bound} yields
\[
R_v^{\mathcal Q}(m)
\le
\mathcal E_v(m)+L S_v(m)
\le
\mathcal E_v(m)
+
L\big(m d_{L,v}+(1-m)d_{H,v}\big),
\]
which proves Eq.~\eqref{eq:channel_tradeoff_bound}.
\end{proof}

This bound makes the node-wise tradeoff explicit: the preferred spectral
mixture can depend on both the standard contrastive risk and the relative
local sensitivity of the low- and high-frequency channels. It does not require
assuming that either channel is universally more reliable.

\paragraph{Remark on the fixed-gate convention.}
The channel-wise sensitivity decomposition assumes that the fusion coefficient is fixed when perturbations are generated. This matches ASPECT-S, where the gate is computed on the clean graph and held fixed during the inner perturbation search. If the gate were recomputed on the perturbed graph, an additional gate-variation term would appear:
\[
z_v^\delta(m^\delta)-z_v(m)
=
m(z_{L,v}^\delta-z_{L,v})
+
(1-m)(z_{H,v}^\delta-z_{H,v})
+
(m^\delta-m)(z_{L,v}^\delta-z_{H,v}^\delta).
\]
Thus, the fixed-gate analysis corresponds to the perturbation-generation procedure used in ASPECT-S.

\subsection{Entropy-Regularized Interpretation of the Policy Target}
\label{app:policy_regularizer}

This subsection gives a simple interpretation of the policy target used in the
utility-aware spectral policy regularizer. The main text shows that, when
robustness to local perturbations is desired, the preferred channel mixture may
depend on both channel-wise utility and channel-wise sensitivity. In practice,
these quantities are not known and must be estimated from training signals. The
policy regularizer used by ASPECT can be viewed as a plug-in instantiation to an
entropy-regularized channel-cost minimization problem.

For a node $v$, let $B_{L,v}$ and $B_{H,v}$ denote estimated costs for the
low- and high-frequency channels, respectively. These costs may represent only
utility evidence, as in ASPECT, or utility plus empirical sensitivity evidence,
as in ASPECT-S. Consider the entropy-regularized channel-selection problem
\begin{equation}
\min_{q_v\in\Delta_2}
\sum_{c\in\{L,H\}} q_{c,v} B_{c,v}
+
\tau_g
\sum_{c\in\{L,H\}} q_{c,v}\log q_{c,v},
\label{eq:entropy_policy_objective}
\end{equation}
where $\Delta_2=\{q_v:q_{L,v}\ge 0,q_{H,v}\ge 0,q_{L,v}+q_{H,v}=1\}$ and
$\tau_g>0$ is a temperature parameter. The first term prefers channels with
smaller estimated costs, while the negative-entropy regularization controls the sharpness of the target policy through \(\tau_g\).

\begin{proposition}[Entropy-regularized plug-in channel policy]
\label{prop:entropy_policy}
For any $B_{L,v},B_{H,v}\in\mathbb{R}$ and $\tau_g>0$, the optimization problem
in Eq.~\eqref{eq:entropy_policy_objective} has a unique minimizer given by
\begin{equation}
q_{c,v}^{\star}
=
\frac{\exp(-B_{c,v}/\tau_g)}
{\exp(-B_{L,v}/\tau_g)+\exp(-B_{H,v}/\tau_g)},
\qquad c\in\{L,H\}.
\label{eq:entropy_policy_solution}
\end{equation}
In particular, the optimal low-frequency weight is
\begin{equation}
q_{L,v}^{\star}
=
\frac{\exp(-B_{L,v}/\tau_g)}
{\exp(-B_{L,v}/\tau_g)+\exp(-B_{H,v}/\tau_g)}.
\label{eq:entropy_policy_low_weight}
\end{equation}
\end{proposition}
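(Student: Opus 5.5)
The plan is to reduce Eq.~\eqref{eq:entropy_policy_objective} to a one-dimensional strictly convex problem and solve its first-order condition. Write $q_{L,v}=q\in[0,1]$ and $q_{H,v}=1-q$. With the convention $0\log 0=0$, the objective becomes
\[
F(q)=qB_{L,v}+(1-q)B_{H,v}+\tau_g\big(q\log q+(1-q)\log(1-q)\big),
\]
which is continuous on the compact interval $[0,1]$. A minimizer therefore exists.

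For uniqueness, I will differentiate on $(0,1)$:
\[
F''(q)=\tau_g\Big(\tfrac1q+\tfrac1{1-q}\Big)>0.
\]
So $F$ is strictly convex on $[0,1]$, since it is continuous there and strictly convex in the interior, and any minimizer is unique. Next I rule out the endpoints. The first derivative is
\[
F'(q)=B_{L,v}-B_{H,v}+\tau_g\log\frac{q}{1-q}.
\]
It tends to $-\infty$ as $q\downarrow 0$ and to $+\infty$ as $q\uparrow 1$, so neither endpoint can be a minimizer. The minimizer is therefore interior and satisfies $F'(q)=0$.

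Solving $F'(q)=0$ gives
\[
\frac{q}{1-q}=\exp\!\big(-(B_{L,v}-B_{H,v})/\tau_g\big),
\]
hence
\[
q=\frac{\exp(-B_{L,v}/\tau_g)}{\exp(-B_{L,v}/\tau_g)+\exp(-B_{H,v}/\tau_g)}.
\]
This is Eq.~\eqref{eq:entropy_policy_low_weight}. Setting $q_{H,v}^\star=1-q_{L,v}^\star$ then yields Eq.~\eqref{eq:entropy_policy_solution} for both channels.

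As a cross-check, or as an alternative route, I can use the Gibbs variational identity. Define $p_c\propto\exp(-B_{c,v}/\tau_g)$ and $Z=\sum_c\exp(-B_{c,v}/\tau_g)$. Then the objective equals $\tau_g\,\mathrm{KL}(q_v\|p)-\tau_g\log Z$. Since KL is nonnegative and vanishes only at $q_v=p$, this gives existence, uniqueness and the formula in one step.

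The only delicate point is the boundary of the simplex, where $\log$ is undefined. I handle it through the $0\log0=0$ convention together with the divergence of $F'$ at the endpoints, or through the KL argument. The rest is routine calculation. Finally, I will note that $q_{L,v}^\star$ coincides with $\widetilde m_v$ in Eq.~\eqref{eq:policy_target} when $B_{c,v}=b_{c,v}$.
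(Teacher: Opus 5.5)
Your proof is correct and takes essentially the same route as the paper. The paper enforces the simplex constraint with a Lagrange multiplier, solves the stationarity condition $B_{c,v}+\tau_g(1+\log q_{c,v})+\lambda=0$ to get $q_{c,v}\propto\exp(-B_{c,v}/\tau_g)$, and invokes strict convexity for uniqueness; you instead eliminate the constraint via $q_{H,v}=1-q_{L,v}$ and solve $F'(q)=0$ directly. You are also slightly more careful than the paper, since you explicitly rule out the boundary of the simplex (where the logarithm is not differentiable) using the divergence of $F'$, a step the paper's Lagrangian argument leaves implicit.
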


\begin{proof}
The objective in Eq.~\eqref{eq:entropy_policy_objective} is strictly convex on
the probability simplex because $\tau_g>0$ and the negative entropy term is
strictly convex in the interior of the simplex. Introducing a Lagrange
multiplier $\lambda$ for the constraint $\sum_{c}q_{c,v}=1$, the first-order
condition for each $c\in\{L,H\}$ is
\[
B_{c,v}+\tau_g(1+\log q_{c,v})+\lambda=0.
\]
Rearranging gives
\[
q_{c,v}
=
C\exp(-B_{c,v}/\tau_g),
\]
where $C=\exp(-(1+\lambda/\tau_g))$ is a normalization constant independent of
$c$. Enforcing $\sum_c q_{c,v}=1$ gives Eq.~\eqref{eq:entropy_policy_solution}.
Strict convexity implies uniqueness.
\end{proof}

Proposition~\ref{prop:entropy_policy} shows that a softmax over negative
channel costs is the exact solution of an entropy-regularized plug-in policy
problem. ASPECT instantiates the channel costs using channel-wise standard
contrastive evidence:
\begin{equation}
B_{c,v}
=
b_{c,v}^{\mathrm{core}}
\triangleq
\operatorname{Norm}_{\ell}
\bigl(\ell_{c,v}^{\mathrm{std}}\bigr),
\qquad c\in\{L,H\},
\label{eq:core_policy_cost}
\end{equation}
where $\ell_{c,v}^{\mathrm{std}}$ is the standard contrastive loss associated
with channel $c$ at node $v$, and $\operatorname{Norm}_{\ell}(\cdot)$ denotes
the normalization used to make channel-wise losses comparable. This gives the
utility-aware target
\begin{equation}
\widetilde m_v
=
\frac{\exp(-b_{L,v}^{\mathrm{core}}/\tau_g)}
{\exp(-b_{L,v}^{\mathrm{core}}/\tau_g)
+
\exp(-b_{H,v}^{\mathrm{core}}/\tau_g)}.
\label{eq:core_policy_target}
\end{equation}
Thus, in ASPECT, the auxiliary target assigns larger weight to the channel with
smaller estimated contrastive cost, while remaining soft due to entropy
regularization.

In ASPECT-S, when generated perturbations are available, the same plug-in form
is augmented with empirical channel-wise sensitivity estimates:
\begin{equation}
B_{c,v}
=
b_{c,v}^{\mathrm{stable}}
\triangleq
\operatorname{Norm}_{\ell}
\bigl(\ell_{c,v}^{\mathrm{std}}\bigr)
+
\lambda_s
\operatorname{Norm}_{d}
\bigl(\widehat d_{c,v}\bigr),
\qquad c\in\{L,H\},
\label{eq:stable_policy_cost}
\end{equation}
where $\widehat d_{c,v}$ denotes the empirical sensitivity estimate of channel
$c$ at node $v$, $\operatorname{Norm}_{d}(\cdot)$ normalizes sensitivity scores,
and $\lambda_s\ge 0$ controls the contribution of the sensitivity term. The
corresponding utility--sensitivity-aware target is
\begin{equation}
\widetilde m_v
=
\frac{\exp(-b_{L,v}^{\mathrm{stable}}/\tau_g)}
{\exp(-b_{L,v}^{\mathrm{stable}}/\tau_g)
+
\exp(-b_{H,v}^{\mathrm{stable}}/\tau_g)}.
\label{eq:stable_policy_target}
\end{equation}

Finally, the learned node-wise spectral policy $m_v$ is encouraged to match the
plug-in target through an auxiliary binary cross-entropy regularizer. In
implementation, both $\widetilde m_v$ and the channel costs used to construct it
are detached from the computational graph. Thus, this loss updates the predicted
gate through $m_v$ rather than through the pseudo-target. Specifically, we use
\begin{equation}
\label{eq:policy_regularizer_appendix}
\mathcal L_{\mathrm{pol}}
=
-\frac{1}{|\mathcal V|}
\sum_{v\in\mathcal V}
\left[
\operatorname{sg}(\widetilde m_v)\log m_v
+
\bigl(1-\operatorname{sg}(\widetilde m_v)\bigr)\log(1-m_v)
\right],
\end{equation}
where $\operatorname{sg}(\cdot)$ denotes stop-gradient. Equivalently,
$\mathcal L_{\mathrm{pol}}$ is the cross-entropy between a detached Bernoulli
pseudo-target with parameter $\widetilde m_v$ and the predicted Bernoulli gate
with parameter $m_v$. For a fixed target $\widetilde m_v$, this loss is
minimized when $m_v=\widetilde m_v$. Therefore, the policy regularizer encourages
the learned gate to follow the entropy-regularized plug-in channel policy, while
preventing gradients from changing the channel-cost estimates through the
pseudo-target. 

\subsection{Treatment of $T$ and View Sampling}
\label{app:t_randomness}

In the main text, $z_v(m)$ and $z_v^\delta(m)$ are treated as representations
conditioned on the clean and perturbed graphs, respectively, while $T$ captures
randomness in the contrastive objective, such as positive-view and
negative-sample selection. This convention separates representation
perturbations from sampling randomness in the loss.

If view sampling also changes the representation itself, the same proof
strategy can be applied by conditioning on $T$ and treating
$z_v(m;T)$ and $z_v^\delta(m;T)$ as the sampled-view representations. In that
case, one can define sensitivity either conditionally for each $T$ or uniformly
over the sampled views. The resulting bounds have the same form, with
$S_v(m)$ replaced by the corresponding conditional or uniform sensitivity term.

\section{Additional Experiments}
\label{app:additional_experiments}

This section provides additional experimental results omitted from the main text
due to space constraints. We include the full clean-performance comparison, the
full fixed-budget perturbation comparison, and variable-budget perturbation
curves. These results complement the selected-baseline tables in
Section~\ref{sec:experiments} and support the same conclusions.

\subsection{Full Clean Performance Comparison}
\label{app:full_clean_results}

Table~\ref{table:full-real-world} reports the complete clean linear-evaluation
results over all baselines considered in our experiments. The selected-baseline
table in the main text is a compact summary of this full comparison. Across the
full set of baselines, ASPECT achieves the best performance on 8 out of 9
datasets, while ASPECT-S remains competitive under the clean evaluation
protocol.

\begin{table}[htbp]
\caption{Full clean node classification accuracy (mean $\pm$ standard
deviation, \%) under the linear evaluation protocol. This table includes all
baselines considered in our experiments. Boldface indicates the best performance
and underline indicates the second-best performance.}
\setlength{\tabcolsep}{0.39mm}
\centering
\footnotesize % Keeps the font size small for compact presentation
\begin{tabular}{l|ccc|cccccc} % Column definitions (vertical lines still here as per your original table)
\toprule
\multirow{2}{*}{\textbf{Methods}} & \multicolumn{3}{c|}{\textbf{Homophilic Datasets}} & \multicolumn{6}{c}{\textbf{Heterophilic Datasets}} \\
\noalign{\smallskip}
\cline{2-10} % Horizontal line spanning columns 2 to 10 for dataset categories
\noalign{\smallskip}
& \textbf{Cora} & \textbf{Citeseer} & \textbf{Pubmed} & \textbf{Cornell} & \textbf{Texas} & \textbf{Wisconsin} & \textbf{Actor} & \textbf{Chameleon} & \textbf{Squirrel} \\
\midrule % Changed from \hline (separating header from data)
% BCE-based GCL methods
DGI & \valstd{85.88}{0.95} & \valstd{76.44}{0.84} & \valstd{82.13}{0.24} & \valstd{70.82}{2.71} & \valstd{81.48}{2.79} & \valstd{75.00}{4.22} & \valstd{32.09}{1.18} & \valstd{58.23}{0.70} & \valstd{38.80}{0.76} \\
MVGRL & \valstd{87.36}{0.64} & \valstd{78.70}{0.64} & \valstd{86.30}{0.23} & \valstd{67.70}{4.45} & \valstd{73.11}{4.47} & \valstd{74.25}{2.43} & \valstd{32.98}{0.53} & \valstd{57.75}{1.20} & \valstd{40.25}{1.14} \\
GMI & \valstd{85.09}{1.13} & \valstd{76.38}{0.70} & \valstd{83.06}{0.34} & \valstd{62.79}{3.85} & \valstd{68.03}{2.02} & \valstd{62.13}{2.88} & \valstd{32.37}{1.16} & \valstd{62.47}{1.52} & \valstd{39.82}{0.94} \\
GGD & \valstd{87.21}{1.18} & \valstd{79.25}{1.06} & \valstd{85.38}{0.25} & \valstd{80.33}{1.80} & \valstd{82.62}{1.41} & \valstd{73.25}{3.28} & \valstd{32.27}{1.17} & \valstd{57.64}{1.65} & \valstd{40.87}{0.93} \\
% InfoNCE-based GCL methods
GraphCL & \valstd{86.54}{1.34} & \valstd{78.99}{1.95} & \valstd{85.16}{0.60} & \valstd{61.48}{4.69} & \valstd{66.07}{3.42} & \valstd{60.63}{2.19} & \valstd{32.45}{1.13} & \valstd{58.49}{1.23} & \valstd{42.92}{0.96} \\
GRACE & \valstd{83.27}{0.74} & \valstd{73.79}{0.57} & \valstd{81.71}{0.14} & \valstd{60.66}{2.94} & \valstd{75.74}{3.12} & \valstd{72.13}{1.99} & \valstd{31.97}{1.13} & \valstd{59.52}{2.65} & \valstd{42.68}{1.10} \\
GCA & \valstd{84.09}{0.85} & \valstd{75.23}{1.19} & \valstd{82.01}{0.34} & \valstd{53.11}{4.01} & \valstd{81.97}{1.58} & \valstd{73.50}{2.85} & \valstd{31.13}{1.11} & \valstd{65.54}{1.10} & \valstd{47.13}{0.93} \\
GREET & \valstd{85.16}{0.77} & \valstd{79.06}{1.34} & \valstd{85.64}{0.28} & \valstd{78.36}{2.77} & \valstd{78.03}{3.94} & \valstd{84.63}{2.10} & \valstd{37.12}{0.67} & \valstd{60.57}{1.03} & \valstd{42.80}{1.01} \\
\midrule
% Invariance-keeping GCL methods
BGRL & \valstd{84.45}{0.66} & \valstd{74.84}{1.44} & \valstd{83.06}{0.29} & \valstd{59.84}{3.12} & \valstd{69.84}{2.91} & \valstd{62.88}{3.52} & \valstd{32.48}{1.16} & \valstd{64.09}{3.44} & \valstd{47.02}{0.95} \\
GBT & \valstd{84.89}{1.11} & \valstd{76.59}{0.81} & \valstd{86.10}{0.29} & \valstd{59.18}{3.54} & \valstd{72.79}{2.79} & \valstd{62.38}{2.71} & \valstd{34.34}{1.10} & \valstd{68.77}{1.23} & \valstd{48.86}{0.87} \\
CCA-SSG & \valstd{87.39}{0.89} & \valstd{79.60}{1.01} & \valstd{84.95}{0.26} & \valstd{78.69}{4.61} & \valstd{87.87}{1.89} & \valstd{82.88}{3.58} & \valstd{34.86}{1.13} & \valstd{59.84}{1.21} & \valstd{41.50}{1.12} \\
\midrule
% Heterophilic GCL methods
SP-GCL & \valstd{82.99}{1.18} & \valstd{75.54}{1.06} & \valstd{85.74}{0.21} & \valstd{69.41}{1.49} & \valstd{69.76}{1.23} & \valstd{69.34}{0.77} & \valstd{35.92}{0.67} & \valstd{69.23}{1.23} & \valstd{53.05}{1.05} \\
HLCL & \valstd{85.53}{1.03} & \valstd{76.79}{0.60} & \valstd{85.13}{0.18} & \valstd{64.00}{8.98} & \valstd{78.38}{5.08} & \valstd{79.50}{4.50} & \valstd{40.56}{0.70} & \valstd{63.86}{1.34} & \valstd{44.49}{0.68} \\
PolyGCL & \valstd{87.57}{0.62} & \valstd{79.81}{0.85} & \valstd{\underline{87.15}}{0.27} & \valstd{82.62}{3.11} & \valstd{88.03}{1.80} & \valstd{85.50}{1.88} & \valstd{41.15}{0.88} & \valstd{71.62}{0.96} & \valstd{56.49}{0.72} \\
S3GCL & \valstd{87.04}{1.25} & \valstd{77.48}{0.80} &\valstd{86.03}{0.37} & \valstd{81.27}{3.67} & \valstd{86.12}{3.91} & \valstd{84.56}{2.71} & \valstd{40.06}{1.58} & \valstd{71.88}{1.91} & \valstd{56.90}{1.37} \\
\midrule
RDGI & \valstd{83.53}{1.23} & \valstd{78.99}{0.80} & \valstd{80.89}{1.55} & \valstd{67.21}{6.06} & \valstd{69.01}{4.59} & \valstd{56.75}{4.12} & \valstd{32.74}{1.27} & \valstd{59.95}{1.11} & \valstd{42.71}{0.70} \\
ARIEL & \valstd{87.30}{0.71} & \valstd{79.53}{0.61} & \valstd{86.42}{0.47} & \valstd{70.70}{2.46} & \valstd{76.19}{5.02} & \valstd{71.15}{2.38} & \valstd{37.68}{1.03} & \valstd{64.53}{1.47} & \valstd{42.42}{1.53} \\
\midrule
\textbf{ASPECT} & \valstd{\textbf{88.94}}{1.10} & \valstd{\textbf{81.82}}{0.90} & \valstd{\textbf{87.75}}{1.03} & \valstd{\textbf{89.18}}{2.77} & \valstd{\textbf{91.76}}{2.09} & \valstd{\textbf{90.03}}{2.09} & \valstd{\textbf{42.64}}{1.55} & \valstd{\textbf{72.88}}{1.90} & \valstd{\underline{59.39}}{1.76} \\
\textbf{ASPECT-S} & \valstd{\underline{88.69}}{0.72} & \valstd{\underline{81.30}}{0.85} & \valstd{86.92}{0.69} & \valstd{\underline{88.68}}{2.26} & \valstd{\underline{90.41}}{1.97} & \valstd{\underline{88.00}}{2.12} & \valstd{\underline{41.73}}{1.30} & \valstd{\underline{72.48}}{1.43} & \valstd{\textbf{59.91}}{0.90} \\
\bottomrule 
\end{tabular}

\label{table:full-real-world}
\end{table}

\subsection{Full Perturbation Performance Comparison}
\label{app:full_perturbation_results}

Table~\ref{table:full_robustness_attack} gives the complete fixed-budget
perturbation results. In this setting, graph-structure and feature perturbations
are applied jointly: Metattack perturbs 10\% of edges, and feature masking
corrupts 10\% of node features. ASPECT-S obtains the best perturbed accuracy on
all evaluated datasets and the lowest average relative accuracy drop, consistent
with the main-text summary.

\begin{table}[htbp]
\caption{Full node classification accuracy (mean $\pm$ standard deviation, \%)
under fixed-budget joint graph-structure and feature perturbations. Metattack
perturbs 10\% of edges and feature masking corrupts 10\% of node features. This
table includes all baselines considered in the perturbation evaluation.
Boldface indicates the best result and underline indicates the second-best
result.}
\centering
\footnotesize % Keeps the font size small for compact presentation
\setlength{\tabcolsep}{4pt}
\begin{tabular}{l|ccc|ccc|c} % Adjusted column count: Methods | 3 Homophilic | 3 Heterophilic | 1 Avg. Drop
\toprule
\multirow{2}{*}{\textbf{Methods}} & \multicolumn{3}{c|}{\textbf{Homophilic Datasets}} & \multicolumn{3}{c|}{\textbf{Heterophilic Datasets}} & \multirow{2}{*}{\makecell{\textbf{Avg.} \\ \textbf{Drop (\%)}}} \\
\noalign{\smallskip} % Small vertical space
\cline{2-7} % Horizontal line spanning columns 2 to 7 for dataset categories
\noalign{\smallskip} % Small vertical space
& \textbf{Cora} & \textbf{Citeseer} & \textbf{Pubmed} & \textbf{Actor} & \textbf{Chameleon} & \textbf{Squirrel} & \\
\midrule % Separating header from data
DGI & \valstd{79.62}{0.62} & \valstd{72.25}{0.85} & \valstd{74.29}{1.01} & \valstd{30.28}{1.32} & \valstd{51.47}{0.70} & \valstd{32.94}{0.73} & 9.11 \\
MVGRL & \valstd{77.93}{0.76} & \valstd{70.31}{1.00} & \valstd{73.57}{0.49} & \valstd{27.00}{0.52} & \valstd{54.62}{1.09} & \valstd{39.31}{1.13} & 10.35 \\
GMI & \valstd{79.23}{0.56} & \valstd{70.67}{0.85} & \valstd{73.51}{0.66} & \valstd{28.88}{0.96} & \valstd{52.01}{1.27} & \valstd{32.07}{1.15} & 12.14\\
GGD & \valstd{80.72}{0.61} & \valstd{71.00}{0.83} & \valstd{72.97}{0.70} & \valstd{30.29}{1.60} & \valstd{50.92}{1.51} & \valstd{32.23}{1.19} & 11.89\\
GraphCL & \valstd{78.54}{0.89} & \valstd{72.40}{1.19} & \valstd{73.94}{0.70} & \valstd{31.04}{0.56} & \valstd{49.93}{0.88} & \valstd{31.69}{1.44} & 12.65 \\
GRACE & \valstd{77.08}{1.28} & \valstd{70.67}{0.86} & \valstd{75.25}{0.60} & \valstd{30.78}{0.71} & \valstd{51.38}{1.75} & \valstd{32.76}{1.07} & 10.03 \\
GCA & \valstd{76.39}{0.92} & \valstd{56.55}{1.31} & \valstd{71.32}{0.87} & \valstd{31.87}{0.97} & \valstd{58.75}{1.09} & \valstd{37.20}{0.90} & 12.68 \\
GREET & \valstd{78.80}{1.45} & \valstd{75.44}{0.59} & \valstd{79.47}{0.57} & \valstd{34.46}{1.23} & \valstd{51.77}{1.55} & \valstd{35.64}{1.32} & 9.61 \\
\midrule
BGRL & \valstd{75.04}{0.81} & \valstd{68.10}{0.83} & \valstd{73.29}{1.03} & \valstd{30.19}{1.23} & \valstd{53.00}{1.20} & \valstd{35.05}{1.09} & 13.62 \\
GBT & \valstd{79.84}{0.46} & \valstd{72.07}{0.89} & \valstd{75.60}{1.3} & \valstd{33.10}{1.23} & \valstd{57.59}{1.41} & \valstd{38.93}{0.51} & 10.71 \\
CCA-SSG & \valstd{82.79}{1.28} & \valstd{74.88}{0.72} & \valstd{77.01}{0.90} & \valstd{30.70}{0.77} & \valstd{49.63}{1.09} & \valstd{31.23}{1.44} & 12.57 \\
\midrule
SP-GCL & \valstd{76.32}{1.11} & \valstd{70.12}{1.07} & \valstd{74.76}{0.79} & \valstd{30.77}{0.76} & \valstd{62.02}{1.72} & \valstd{41.94}{1.32} & 12.29 \\
PolyGCL & \valstd{83.18}{0.78} & \valstd{72.51}{1.25} & \valstd{77.82}{0.83} & \valstd{\underline{37.35}}{0.90} & \valstd{59.01}{1.35} & \valstd{40.89}{1.40} & 13.22 \\
S3GCL & \valstd{80.31}{0.62} & \valstd{71.72}{1.40} & \valstd{79.46}{1.57} & \valstd{36.03}{1.28} & \valstd{59.89}{1.99} & \valstd{40.29}{1.75} & 13.12 \\
\midrule
RDGI & \valstd{78.85}{0.96} & \valstd{73.92}{0.68} & \valstd{74.12}{1.41} & \valstd{30.37}{1.47} & \valstd{52.66}{0.94} & \valstd{34.00}{0.63} & 10.03\\
ARIEL & \valstd{\underline{84.80}}{1.01} & \valstd{76.17}{1.39} & \valstd{81.08}{0.95} & \valstd{32.33}{0.43} & \valstd{56.18}{1.08} & \valstd{36.09}{1.11} & \underline{9.22}\\
\midrule
\textbf{ASPECT} & \valstd{84.08}{1.41} & \valstd{\underline{77.44}}{0.86} & \valstd{\underline{82.38}}{0.52} & \valstd{36.14}{0.47} & \valstd{\underline{64.18}}{1.93} & \valstd{\underline{46.37}}{1.08} & 11.01 \\
\textbf{ASPECT-S} & \valstd{\textbf{85.30}}{0.72} & \valstd{\textbf{78.86}}{0.70} & \valstd{\textbf{84.93}}{0.44} & \valstd{\textbf{39.20}}{0.63} & \valstd{\textbf{66.97}}{1.70} & \valstd{\textbf{50.17}}{0.92} & \textbf{6.51} \\
\bottomrule
\end{tabular}
\label{table:full_robustness_attack}
\end{table}

Figure~\ref{fig:attack_rate} further evaluates the variable-budget setting. We
jointly increase the edge perturbation ratio and feature masking ratio from
0\% to 25\%. ASPECT-S consistently maintains higher accuracy as the perturbation
budget increases, indicating that the optional stability-aware branch improves
degradation behavior beyond the fixed-budget setting.

\begin{figure*}[htbp]
    \centering
    \begin{tikzpicture}
    % 定义美观的学术配色 (Tableau风格)
    \definecolor{myred}{RGB}{214, 39, 40}
    \definecolor{myblue}{RGB}{31, 119, 180}
    \definecolor{myorange}{RGB}{255, 127, 14}
    \definecolor{mygreen}{RGB}{44, 160, 44}

    \begin{groupplot}[
        group style={
            group size=4 by 1,
            horizontal sep=1cm, % 【关键修改】增加间距，解决Y轴重叠
            vertical sep=1.5cm,
        },
        width=0.28\textwidth, % 【关键修改】稍微减小宽度以适配更大的间距
        height=4.2cm,
        xlabel={Perturbation Rate ($\%$)},
        grid=major,
        grid style={dashed, gray!20}, % 网格线淡一点，不抢眼
        tick align=outside,
        tick pos=left,
        % 字体与标签设置
        label style={font=\small},
        tick label style={font=\scriptsize}, % 刻度字体改小一点，防止拥挤
        ylabel style={yshift=-0.1cm}, 
        xlabel style={yshift=0.1cm},
        xmin=0, xmax=25,
        xtick={0, 5, 10, 15, 20, 25},
        xticklabels={0, 5, 10, 15, 20, 25},
        legend columns=4,     
        legend style={
            at={(-1.6, 1.3)}, % 【关键修改】图例上移 (1.2 -> 1.35)，防止遮挡标题
            anchor=south,
            draw=none,        
            font=\footnotesize,
            /tikz/every even column/.append style={column sep=0.4cm}
        },
        % 统一样式，mark size 稍微调大一点点以增加区分度
        every axis plot/.append style={thick, mark size=1.4pt},
    ]

    % ======================================
    % Dataset 1: Cora
    % ======================================
    \nextgroupplot[title=\textbf{Cora}, ylabel={Accuracy ($\%$)}]
    
    % ASPECT: 红色实线 + 圆点
    \addplot[color=myred, mark=*, solid] coordinates {
        (0, 88.69) (5, 86.55) (10, 85.30) (15, 85.02) (20, 84.48) (25, 84.02)
    };
    % PolyGCL: 蓝色虚线 + 方块
    \addplot[color=myblue, mark=square*, dashed] coordinates {
        (0, 87.57) (5, 84.58) (10, 83.11) (15, 81.19) (20, 79.59) (25, 77.40)
    };
    % GREET: 橙色点划线 + 菱形
    \addplot[color=myorange, mark=diamond*, dashdotted] coordinates {
        (0, 85.16) (5, 81.70) (10, 81.13) (15, 79.75) (20, 78.67) (25, 76.86)
    };
    % CCA-SSG: 绿色加密点线 + 三角 (densely dotted 比 dotted 更好看)
    \addplot[color=mygreen, mark=triangle*, densely dotted] coordinates {
        (0, 87.39) (5, 83.28) (10, 82.61) (15, 80.34) (20, 79.95) (25, 76.10)
    };

    % ======================================
    % Dataset 2: Citeseer
    % ======================================
    \nextgroupplot[title=\textbf{Citeseer}]
    
    \addplot[color=myred, mark=*, solid] coordinates {
        (0, 81.30) (5, 79.56) (10, 78.86) (15, 78.17) (20, 77.63) (25, 77.64)
    };
    \addplot[color=myblue, mark=square*, dashed] coordinates {
        (0, 79.81) (5, 78.00) (10, 75.55) (15, 73.98) (20, 73.43) (25, 72.10)
    };
    \addplot[color=myorange, mark=diamond*, dashdotted] coordinates {
        (0, 79.06) (5, 77.50) (10, 75.42) (15, 75.04) (20, 74.50) (25, 73.65)
    };
    \addplot[color=mygreen, mark=triangle*, densely dotted] coordinates {
        (0, 79.60) (5, 77.89) (10, 74.67) (15, 71.00) (20, 69.75) (25, 66.34)
    };

    % ======================================
    % Dataset 3: Chameleon
    % ======================================
    \nextgroupplot[title=\textbf{Chameleon}]
    
    \addplot[color=myred, mark=*, solid] coordinates {
        (0, 72.48) (5, 69.59) (10, 66.97) (15, 64.65) (20, 62.26) (25, 62.17)
    };
    \addplot[color=myblue, mark=square*, dashed] coordinates {
        (0, 71.62) (5, 63.96) (10, 58.40) (15, 56.65) (20, 53.61) (25, 52.23)
    };
    \addplot[color=myorange, mark=diamond*, dashdotted] coordinates {
        (0, 60.57) (5, 55.18) (10, 53.85) (15, 52.64) (20, 50.83) (25, 50.96)
    };
    \addplot[color=mygreen, mark=triangle*, densely dotted] coordinates {
        (0, 59.84) (5, 52.07) (10, 49.23) (15, 45.53) (20, 40.85) (25, 39.89)
    };
    
    % ======================================
    % Dataset 4: Squirrel
    % ======================================
    \nextgroupplot[title=\textbf{Squirrel}]
    % 激活图例
    \legend{\textbf{ASPECT-S}, PolyGCL, GREET, CCA-SSG} 
    
    \addplot[color=myred, mark=*, solid] coordinates {
        (0, 59.91) (5, 53.74) (10, 50.17) (15, 46.79) (20, 45.47) (25, 43.80)
    };
    \addplot[color=myblue, mark=square*, dashed] coordinates {
        (0, 56.49) (5, 45.02) (10, 41.72) (15, 39.73) (20, 39.45) (25, 36.85)
    };
    \addplot[color=myorange, mark=diamond*, dashdotted] coordinates {
        (0, 42.80) (5, 36.87) (10, 36.03) (15, 35.82) (20, 36.72) (25, 35.73)
    };
    \addplot[color=mygreen, mark=triangle*, densely dotted] coordinates {
        (0, 41.50) (5, 36.38) (10, 32.68) (15, 31.66) (20, 31.75) (25, 32.56)
    };

    \end{groupplot}
    \end{tikzpicture}
    \caption{\textbf{Variable-budget perturbation performance.}
    Classification accuracy (\%) as the perturbation budget increases. For each
    budget, graph-structure perturbations are generated by Metattack and feature
    perturbations are applied by masking the same percentage of node features.
    ASPECT-S degrades more gracefully than representative baselines across both
    homophilic and heterophilic datasets, suggesting that the optional
    stability-aware branch improves performance under the evaluated joint
    graph-structure and feature perturbation protocol.}
    \label{fig:attack_rate}
\end{figure*}

\FloatBarrier

% --- Section 1: Dataset Details (Addresses "Dataset Usage" section of Checklist) ---
\section{Dataset Details}
\label{app:dataset}
\normalsize % Ensure text is normal size after section titles

As indicated in the Reproducibility Checklist, this paper relies on several publicly available datasets. We provide detailed information to facilitate their usage and verification.

\subsection{Dataset Descriptions and Sources}
\normalsize
We conduct our experiments on the following widely-used benchmark datasets, all drawn from existing literature and publicly available for research purposes:
\begin{itemize}
    \item \textbf{Homophilic Datasets:} \texttt{Cora}, \texttt{Citeseer}, and \texttt{Pubmed} \citep{sen2008collective}. These are standard citation networks commonly used for evaluating graph learning models. In these graphs, nodes represent papers and edges
    represent citation relationships between two papers. The features consist of bag-of-word
    representations of the papers, while the labels indicate the research topic of each paper.
    \item \textbf{Heterophilic Datasets:} \texttt{Chameleon}, \texttt{Squirrel} \citep{rozemberczki2021multi} are two heterophilic networks based on Wikipedia. The nodes denote web pages in Wikipedia and edges denote links between them. The features consist of informative nouns in the Wikipedia pages, and labels indicate the average traffic of the web pages.
    \texttt{Actor} \citep{geomgcn} is an actor co-occurrence network where nodes represent actors and edges indicate co-occurrence on the same Wikipedia page. Node features are keyword-based representations extracted from Wikipedia pages, and labels correspond to five actor-related categories derived from the page content. It is commonly used as a heterophilic graph benchmark.
    \texttt{Cornell}, \texttt{Texas}, and \texttt{Wisconsin} \citep{geomgcn} are three heterophilic networks originating from the WebKB project, where nodes are web pages of the computer science departments of different universities and edges are hyperlinks between them. The features of each page are represented as bag-of-words, and the labels indicate the types of web pages.
\end{itemize}
All datasets were sourced from their official or commonly accepted repositories (e.g., PyTorch Geometric, Deep Graph Library). No custom or novel datasets were created or used for this work. The motivation for selecting these datasets is to cover a broad spectrum of graph properties, including both homophilic and heterophilic structures, which is crucial for evaluating spectral graph contrastive learning methods with node-adaptive fusion, as well as the optional stability-aware extension ASPECT-S.
% Furthermore, all datasets utilized in our experiments can be obtained from the repository associated with the ChebNetII work \citep{chebnet2}, available at \url{https://github.com/ivam-he/ChebNetII}.

% \subsection{Dataset Statistics}
% \normalsize
% The key statistics for the datasets used in our experiments are summarized in Table \ref{tab:dataset_stats}. The homophily ratio ($H$) is calculated as the proportion of edges connecting nodes of the same class, as defined in our main paper.

% \begin{table}[h]
% \centering
% \caption{Dataset Statistics. $N$: Number of nodes, $E$: Number of edges, $F$: Number of features, $C$: Number of classes, $H$: Homophily ratio.}
% \label{tab:dataset_stats}
% \begin{tabular}{lccccc}
% \toprule
% \textbf{Dataset} & \textbf{$N$} & \textbf{$E$} & \textbf{$F$} & \textbf{$C$} & \textbf{$H$} \\
% \midrule
% Cora & 2,708 & 5,278 & 1,433 & 7 & 0.81 \\
% Citeseer & 3,327 & 4,552 & 3,703 & 6 & 0.74 \\
% Pubmed & 19,717 & 44,338 & 500 & 3 & 0.80 \\
% Cornell & 183 & 298 & 1,703 & 5 & 0.31 \\
% Texas & 187 & 325 & 1,703 & 5 & 0.11 \\
% Wisconsin & 251 & 515 & 1,703 & 5 & 0.20 \\
% Actor & 7,600 & 30,019 & 932 & 5 & 0.22 \\
% Chameleon & 2,277 & 36,101 & 2,277 & 5 & 0.24 \\
% Squirrel & 5,201 & 217,073 & 2,089 & 5 & 0.22 \\
% \bottomrule
% \end{tabular}
% \end{table}

\subsection{Data Preprocessing and Partitioning}
\normalsize
For all datasets, raw node features are used. We symmetrize adjacency matrices,
treat them as unweighted, and add self-loops before model training.
We strictly adhere to the standard experimental protocol of 10 random 60\%/20\%/20\% train/validation/test splits for node classification, as proposed by \citet{chien2020adaptive} and commonly used in graph representation learning literature. The random seeds for these splits are fixed and consistent across all runs and baselines to ensure a fair and reproducible comparison of results. No additional dataset preprocessing beyond these steps was applied. Training-time graph augmentations used by contrastive objectives are described in Section~\ref{sec:method} and follow the configurations in Appendix~\ref{app:implementation}.

% --- Section 2: Experimental Setup and Reproducibility Details (Addresses "Computational Experiments" section of Checklist) ---
\section{Implementation Details}
\label{app:implementation}
\normalsize

This section addresses the computational aspects of our experiments, providing the necessary details for reproducibility as outlined in the checklist.

\subsection{Baselines}
\label{subsec:baseline}
We compare ASPECT against representative self-supervised GCL methods from four families.

\textbf{(i) General augmentation-based GCL:}
DGI~\citep{dgi}, MVGRL~\citep{mvgrl}, GMI~\citep{gmi}, GGD~\citep{ggd},
GraphCL~\citep{graphcl}, GRACE~\citep{grace}, GCA~\citep{gca}, and GREET~\citep{greet}.

\textbf{(ii) Invariance-keeping / predictor-based GCL:}
BGRL~\citep{bgrl}, GBT~\citep{gbt}, and CCA-SSG~\citep{cca-ssg}.

\textbf{(iii) Heterophily- and spectral-oriented GCL:}
SP-GCL~\citep{spgcl}, HLCL~\citep{hlcl}, PolyGCL~\citep{polygcl}, and S3GCL~\citep{s3gcl}.
Among them, \textbf{PolyGCL} is the most direct external control for our theory: it adopts dual spectral channels but
relies on \emph{node-agnostic} fusion.

\textbf{(iv) Robust / adversarial representation learning on graphs:}
RDGI~\citep{rdgi} and ARIEL~\citep{ariel}.

\subsection{Reproducibility Note.} 
\label{app:reproduce_note}
We primarily utilize official open-source implementations for all baselines (see Table~\ref{tab:codes_commit_numbers} for URLs). 
Regarding \textbf{HLCL}~\citep{hlcl}, as we could not obtain an official implementation compatible with our evaluation pipeline at the time of our experiments, we report its clean performance directly from the PolyGCL paper~\citep{polygcl}, which follows the same reported evaluation protocol. Consequently, HLCL is excluded from the perturbation evaluation as we could not subject it to our specific Metattack pipeline. 
Similarly, recent global fusion methods such as \textbf{DPGCL}~\citep{huang2024dpgcl} and \textbf{LOHA}~\citep{zou2025loha} are excluded from comparison, since we could not obtain an official implementation compatible with our protocol at the time of our experiments.

In our perturbation evaluation protocol, for each split, Metattack is run using only the training labels of the corresponding split and a fixed surrogate model. Perturbed graphs are generated independently for each dataset/split. Feature perturbation masks feature entries at the specified ratio. No perturbed validation data is used for hyperparameter selection.

\subsection{Spectral Filter Parameterization}
\label{app:spectral_filter_parameterization}

This subsection describes the spectral filter parameterization used to construct
the low- and high-frequency views in ASPECT. The parameterization follows the
monotone spectral response design used in PolyGCL~\citep{polygcl}. It is an
implementation choice for producing complementary low- and high-frequency
channels, while the main contribution of ASPECT lies in how these channels are
adaptively fused at the node level.

We approximate each spectral filter by a truncated Chebyshev polynomial of order
$K$. Instead of directly learning the polynomial coefficients, we first learn
the filter values at Chebyshev nodes and then recover the polynomial
coefficients analytically. Let $\{x_j\}_{j=0}^{K}$ denote the Chebyshev nodes and
let $\gamma_j=g(x_j)$ be the filter response at node $x_j$. To encourage a
monotonically non-decreasing high-pass response and a monotonically
non-increasing low-pass response, we parameterize the filter values using
learnable increments $\{\delta_j^H\}_{j=0}^{K}$ and
$\{\delta_j^L\}_{j=0}^{K}$:
\begin{equation}
\begin{aligned}
\gamma_i^{H}
&=
\sum_{j=0}^{i} \operatorname{ReLU}(\delta_j^{H}),\\
\gamma_i^{L}
&=
\operatorname{ReLU}(\delta_0^{L})
-
\sum_{j=1}^{i} \operatorname{ReLU}(\delta_j^{L}),
\end{aligned}
\qquad i=0,\ldots,K .
\label{eq:filter_response_parameterization}
\end{equation}
This prefix-sum parameterization constrains the high-pass response to increase
with the spectral coordinate and the low-pass response to decrease with the
spectral coordinate.

Given the filter responses $\{\gamma_j\}_{j=0}^{K}$, the Chebyshev coefficients
are recovered by the discrete Chebyshev transform:
\begin{equation}
w_k
=
\frac{2}{K+1}
\sum_{j=0}^{K}
\gamma_j T_k(x_j),
\qquad k=0,\ldots,K,
\label{eq:chebyshev_coefficient_recovery}
\end{equation}
where $T_k(\cdot)$ is the $k$-th Chebyshev basis. In implementation, we apply
this recovery separately to the low- and high-frequency channels, yielding
coefficients $\{w_k^L\}_{k=0}^{K}$ and $\{w_k^H\}_{k=0}^{K}$.

Let $\mathbf L$ be the normalized graph Laplacian and
$\widetilde{\mathbf L}=2\mathbf L/\lambda_{\max}-\mathbf I$ be the rescaled
Laplacian. The two spectral views are then computed as
\begin{equation}
\begin{aligned}
\mathbf Z_L
&=
f_{\theta}\!\left(
\sum_{k=0}^{K}
w_k^{L} T_k(\widetilde{\mathbf L})\mathbf X
\right),\\
\mathbf Z_H
&=
f_{\theta}\!\left(
\sum_{k=0}^{K}
w_k^{H} T_k(\widetilde{\mathbf L})\mathbf X
\right),
\end{aligned}
\label{eq:appendix_spectral_views}
\end{equation}
where $f_{\theta}(\cdot)$ is a shared projection network. The Chebyshev
recursion allows Eq.~\eqref{eq:appendix_spectral_views} to be computed using
sparse matrix-vector multiplications without eigendecomposition. The resulting
$\mathbf Z_L$ and $\mathbf Z_H$ provide complementary low- and high-frequency
views, which are subsequently combined by the node-wise spectral policy in
ASPECT.

\subsection{ASPECT-S Perturbation Generation}
\label{app:aspects_pert_gen}
\paragraph{Rayleigh-based spectral search bias.}
ASPECT-S uses a Rayleigh-based term to guide the inner perturbation-generation
step toward perturbations that induce informative spectral-profile shifts. For a
perturbed graph $G_{\mathrm{pert}}=(A_{\mathrm{pert}},X_{\mathrm{pert}})$, we
define
\[
\Phi_{\mathrm{ray}}
=
\mathcal R(A_{\mathrm{pert}}, Z^{\mathrm{pert}}_{L})
-
\mathcal R(A_{\mathrm{pert}}, Z^{\mathrm{pert}}_{H}),
\qquad
\mathcal R(A,Z)
=
\frac{\operatorname{Tr}(Z^\top L_A Z)}
{\operatorname{Tr}(Z^\top Z)},
\]
where $L_A$ is the normalized graph Laplacian associated with $A$. The Rayleigh
energy measures the Dirichlet smoothness of a representation on the current
perturbed graph: lower values correspond to smoother graph signals, whereas
larger values indicate stronger high-frequency variation. During perturbation
generation, ASPECT-S maximizes
\[
J_{\mathrm{pert}} = L_{\mathrm{gen}} + \lambda_{\mathrm{ray}}\Phi_{\mathrm{ray}}
\]
over the structural and feature perturbation variables, while keeping the
encoder parameters and the clean-graph gate fixed. Since the nominal
low-frequency channel is expected to have lower Rayleigh energy than the
high-frequency channel, maximizing $\Phi_{\mathrm{ray}}$ encourages perturbations
that disturb this relative spectral profile, for example by increasing the
Dirichlet energy of the perturbed low-frequency representation relative to the
perturbed high-frequency representation. This makes the generated perturbations
more informative for exposing channel-dependent sensitivity. Importantly,
$\Phi_{\mathrm{ray}}$ is not used as a robustness certificate and does not assume
that either spectral channel is inherently fragile. It is a spectral search bias
for constructing useful perturbation probes; the node-wise policy is still
trained from the empirical utility and sensitivity evidence in the policy target.
In implementation, the Rayleigh term is computed with sparse Laplacian-vector
products and does not require eigendecomposition.

% Table \ref{tab:codes_commit_numbers} provides a comprehensive list of the baseline models utilized in our comparative experiments. For the sake of full transparency and to ensure the reproducibility of our results, we meticulously tracked and recorded the specific code repositories (URLs) and their corresponding commit hashes from which these baseline implementations were sourced. We primarily utilized the official or commonly accepted open-source implementations of these methods for fair comparison.

% 表格代码
\begin{table}[h]
\centering
\footnotesize % 使用小字体以确保表格内容紧凑并适应页面宽度
\caption{Codes \& commit numbers.}
\label{tab:codes_commit_numbers} % 为表格添加一个标签，方便引用
\begin{tabular}{l l l} % l: 左对齐，p{7cm}: 固定宽度为7cm的段落，l: 左对齐
\toprule
\textbf{Method} & \textbf{URL} & \textbf{Commit} \\
\midrule
DGI & \url{https://github.com/PetarV-/DGI} & 61baf67 \\
MVGRL & \url{https://github.com/kavehhassani/mvgrl} & 628ed2b \\
GMI & \url{https://github.com/zpeng27/GMI} & 3491e8c \\
GGD & \url{https://github.com/zyzisastudyreallyhardguy/graph-group-discrimination} & 7cf72db \\
GRACE & \url{https://github.com/CRIPAC-DIG/GRACE} & 51b4496 \\
GCA & \url{https://github.com/CRIPAC-DIG/GCA} & cd6a9f0 \\
GraphCL & \url{https://github.com/Shen-Lab/GraphCL} & a0c8c97 \\
GREET & \url{https://github.com/yixinliu233/GREET} & 8bcc940 \\
BGRL & \url{https://github.com/nerdslab/bgrl} & 60f9f19 \\
GBT & \url{https://github.com/pbielak/graph-barlow-twins} & ec62580 \\
CCA-SSG & \url{https://github.com/hengruizhang98/CCA-SSG} & cea6e73 \\
SP-GCL & \url{https://github.com/haonan3/SPGCL} & 58caefa \\
POLYGCL & \url{https://github.com/ChenJY-Count/PolyGCL} & ec246bc \\
S3GCL & \url{https://github.com/GuanchengWan/S3GCL} & 35c4cfc
\\
RDGI & \url{https://github.com/galina0217/robustgraph} & 2ee6abb \\
ARIEL & \url{https://github.com/Shengyu-Feng/ARIEL} & e761cb8 \\
\bottomrule
\end{tabular}
\end{table}

\subsection{ASPECT-S Optimization and Implementation Details}
\label{app:aspect_s_optimization}

This subsection describes the engineering optimizations used to reduce computational and memory overhead of ASPECT-S.

\paragraph{Sparse perturbation candidates.}
For structural perturbations, ASPECT-S does not search over all possible node
pairs. Instead, we construct a sparse candidate edge set
$\mathcal E_{\mathrm{cand}}$. Candidate edges may include existing edges,
sampled local non-edges, or edges around nodes used in the stability-aware loss.
Perturbation variables are defined and updated only on
$\mathcal E_{\mathrm{cand}}$. Feature perturbations are constrained by a
prescribed feature budget, such as a mask ratio or a sampled set of candidate
feature dimensions. This reduces structural perturbation search from dense
$O(|V|^2)$ candidates to a sparse set, lowers memory and runtime, and keeps
perturbations local and scalable.

\paragraph{ASPECT warmup and perturbation interval.}
We first train ASPECT for a warmup period without ASPECT-S perturbation
generation. After warmup, perturbations are generated only every $r$ epochs.
On non-perturbation epochs, we optimize the ASPECT objective; on perturbation
epochs, we generate perturbations and optimize the ASPECT-S objective:
\[
\text{if } t \le T_{\mathrm{warm}}:
\quad \text{train ASPECT only},
\]
\[
\text{else if } (t-T_{\mathrm{warm}}) \bmod r = 0:
\quad \text{generate perturbations and train ASPECT-S},
\]
\[
\text{else}:
\quad \text{train ASPECT only}.
\]
This schedule reduces expensive perturbation-generation calls, lets the
node-wise spectral policy learn first under the ASPECT objective, and prevents
the optional ASPECT-S branch from dominating training cost.

\paragraph{Detached policy target and no-gradient channel scores.}
The policy regularizer uses a soft target $\widetilde m_v$ computed from
channel-wise costs. When constructing this target, we detach channel embeddings,
channel-wise standard losses, and empirical sensitivity estimates. Thus, only
the predicted gate $m_v$ receives gradients from $\mathcal L_{\mathrm{pol}}$;
the target $\widetilde m_v$, channel costs $b_{c,v}$, standard channel losses
$\ell_{c,v}^{\mathrm{std}}$, and empirical sensitivity estimates
$\widehat d_{c,v}$ are treated as no-gradient quantities. A code-style sketch is
shown below:
\begin{verbatim}
with torch.no_grad():
    loss_L = channel_score(z_L.detach(), z_L_aug.detach())
    loss_H = channel_score(z_H.detach(), z_H_aug.detach())

    if use_aspect_s:
        d_L = torch.norm(z_L_pert.detach() - z_L.detach(), dim=-1)
        d_H = torch.norm(z_H_pert.detach() - z_H.detach(), dim=-1)
        b_L = norm(loss_L) + lambda_s * norm(d_L)
        b_H = norm(loss_H) + lambda_s * norm(d_H)
    else:
        b_L = norm(loss_L)
        b_H = norm(loss_H)

    target_logits = torch.stack([-b_L, -b_H], dim=-1) / tau_g
    target_m = softmax(target_logits, dim=-1)[:, 0]

policy_loss = BCE(m_v, target_m)
\end{verbatim}
Numerical clipping can be applied to $m_v$ when evaluating the BCE term. This
prevents the model from reducing $\mathcal L_{\mathrm{pol}}$ by manipulating
the pseudo-target, reduces memory because no computation graph is stored for
target construction, and requires no additional encoder forward passes.

\paragraph{Sparse Rayleigh computation.}
The Rayleigh quotient is computed directly with sparse Laplacian-vector
products, without eigendecomposition. For
\[
\mathcal R(\mathbf A,\mathbf Z)
=
\frac{\mathrm{Tr}(\mathbf Z^\top \mathbf L_{\mathbf A}\mathbf Z)}
{\mathrm{Tr}(\mathbf Z^\top\mathbf Z)},
\]
we compute
\[
\mathbf Y=\mathbf L_{\mathbf A}\mathbf Z,
\qquad
\mathcal R(\mathbf A,\mathbf Z)
=
\frac{\langle \mathbf Z,\mathbf Y\rangle_F}
{\|\mathbf Z\|_F^2+\epsilon}.
\]
A code-style implementation is:
\begin{verbatim}
LZ = sparse_matmul(L_A, Z)
numerator = (Z * LZ).sum()
denominator = (Z * Z).sum()
R = numerator / (denominator + eps)
\end{verbatim}
The Rayleigh search term is then computed as
\[
\Phi_{\mathrm{ray}}
=
\mathcal R(\mathbf A_{\mathrm{pert}},\mathbf Z_L^{\mathrm{pert}})
-
\mathcal R(\mathbf A_{\mathrm{pert}},\mathbf Z_H^{\mathrm{pert}}).
\]
For sparse graphs, this computation scales with sparse Laplacian
multiplication, approximately $O(|E_{\mathrm{pert}}|d)$, where $d$ is the
embedding dimension. This avoids expensive spectral decomposition and makes the
Rayleigh spectral search bias practical for large sparse graphs.

These optimizations are engineering choices for scalability. They are used to reduce the time and
memory overhead of ASPECT-S while preserving the same training objectives
described in Section~\ref{sec:method}.

\subsection{Model Hyperparameters and Selection Criterion}
\normalsize

For ASPECT and all baselines rerun in our pipeline, we tuned hyperparameters using Optuna. For each rerun baseline, the search space was centered around the configuration recommended in the original paper. HLCL results are copied from PolyGCL as noted in Appendix~\ref{app:reproduce_note} and are therefore not included in our Optuna tuning pipeline. Final settings were selected by validation accuracy on the clean graph.

\begin{table*}[htbp]
\caption{Hyperparameters used for each dataset}
\label{tab:hyperparameters}
\centering
\footnotesize
\setlength{\tabcolsep}{3.2pt} % tighten column spacing
\begin{tabular}{l c c c c c c c c c}
    \toprule
    \textbf{Parameter} & \textbf{Cora} & \textbf{Citeseer} & \textbf{Pubmed} & \textbf{Cornell} & \textbf{Texas} & \textbf{Wisconsin} & \textbf{Actor} & \textbf{Chameleon} & \textbf{Squirrel} \\
    \midrule
    Epochs & 2000 & 500 & 1000 & 500 & 500 & 2000 & 500 & 2000 & 1500 \\
    Patience & 180 & 160 & 40 & 160 & 100 & 20 & 120 & 40 & 140 \\
    LR ($\eta$) & 0.00013 & 0.00106 & 0.00011 & 0.00073 & 0.00010 & 0.00214 & 0.00398 & 0.00335 & 0.00121 \\
    LR$_1$ ($\eta_1$) & 0.00044 & 0.00357 & 0.00535 & 0.00025 & 0.00486 & 0.00016 & 0.00233 & 0.00228 & 0.00157 \\
    LR$_2$ ($\eta_2$) & 0.00915 & 0.00199 & 0.00183 & 0.00295 & 0.00137 & 0.00170 & 0.00054 & 0.00818 & 0.00817 \\
    LR$_\alpha$ ($\eta_\alpha$) & 0.14373 & 26.1982 & 1.48472 & 2.63077 & 0.18482 & 12.8336 & 95.5903 & 12.7409 & 0.15628 \\
    LR$_\beta$ ($\eta_\beta$) & 0.00072 & 0.00026 & 0.00124 & 0.01863 & 0.00111 & 0.00051 & 0.00017 & 0.00138 & 0.08001 \\
    $\epsilon$ & 4.05399 & 1.16728 & 0.39319 & 0.83449 & 1.37270 & 3.48387 & 0.66148 & 3.98710 & 0.35897 \\
    WD ($\lambda$) & 0.00134 & 0.00030 & 0.00786 & 0.09682 & 0.00897 & 3.21e-05 & 0.09832 & 0.09787 & 0.00105 \\
    WD$_1$ ($\lambda_1$) & 0.00158 & 0.00356 & 0.00010 & 0.00462 & 0.04208 & 0.06565 & 0.01628 & 0.00018 & 8.15e-06 \\
    WD$_2$ ($\lambda_2$) & 0.00202 & 0.00313 & 8.34e-05 & 0.00825 & 0.09067 & 0.05710 & 0.01122 & 0.00024 & 2.71e-06 \\
    Rayleigh ($\lambda_{ray}$) & 0.46024 & 0.07248 & 0.96707 & 1.19355 & 1.71332 & 0.31904 & 0.08448 & 0.90943 & 0.61738 \\
    Perturbation Steps & 9 & 5 & 5 & 10 & 4 & 7 & 4 & 7 & 3 \\
    Perturbation Budget & 0.22765 & 0.11267 & 0.29437 & 0.12920 & 0.46972 & 0.22592 & 0.45570 & 0.35284 & 0.21216 \\
    Hidden Dim & 512 & 512 & 512 & 512 & 256 & 512 & 512 & 512 & 512 \\
    $K$ & 5 & 2 & 4 & 5 & 5 & 5 & 5 & 5 & 5 \\
    Dropout & 0.34248 & 0.47064 & 0.03399 & 0.45193 & 0.57931 & 0.56790 & 0.04807 & 0.60798 & 0.69773 \\
    DP Rate & 0.45262 & 0.28825 & 0.45139 & 0.72541 & 0.04969 & 0.87453 & 0.04567 & 0.47966 & 0.34687 \\
    $\tau$ & 0.26108 & 0.20047 & 0.12469 & 0.69792 & 0.60886 & 0.79692 & 0.27668 & 0.12598 & 0.10106 \\
    Batch Norm & False & False & True & False & False & False & False & True & True \\
    Activation & prelu & prelu & prelu & prelu & prelu & relu & prelu & relu & prelu \\
    \bottomrule
\end{tabular}
\end{table*}

\subsection{Hardware and Software Environment}
\normalsize
All experiments reported in the main paper were conducted on a uniform computing environment to ensure consistency and comparability. The computing infrastructure used, including hardware and software configurations, is detailed below:
\begin{itemize}
    \item \textbf{CPU:} AMD EPYC 9554 64-Core Processor @ 3.10GHz (64 Cores, 128 Threads)
    \item \textbf{GPU:} NVIDIA RTX A6000 (48GB GDDR6 memory)
    \item \textbf{RAM:} 256GB DDR4
    \item \textbf{Operating System:} Ubuntu 24.04.2 LTS
    \item \textbf{Python Version:} 3.12.9
    \item \textbf{Deep Learning Framework:} PyTorch 2.4.1
    \item \textbf{GPU Acceleration Libraries:}
        \begin{itemize}
            \item CUDA Toolkit 12.0
            \item cuDNN 9.1.0
        \end{itemize}
    \item \textbf{Other Key Python Libraries:}
        \begin{itemize}
            \item NumPy 1.26.4
            \item SciPy 1.13.1
            \item scikit-learn 1.6.1
            \item PyTorch Geometric (PyG) 2.6.1 (for graph data structures and operations)
        \end{itemize}
\end{itemize}
A comprehensive \texttt{ASPECT\_env.yaml} file is provided within the accompanying code package, listing all exact library versions for precise environment replication.

\section{Limitations}
\label{app:limitations}

\paragraph{Scalability of perturbation generation.}
Although ASPECT adds only lightweight node-wise policy computation, ASPECT-S
requires perturbation generation and therefore introduces additional training
cost. We mitigate this cost using sparse perturbation candidates, perturbation
intervals, detached policy targets, and sparse Rayleigh computation, but further
scaling ASPECT-S to very large graphs remains an important direction.

\paragraph{Scope of spectral views.}
ASPECT focuses on adaptive fusion between low- and high-frequency views
constructed by learnable spectral filters. This design captures a useful
two-channel decomposition, but richer multi-band spectral policies or
node-dependent filter families may further improve flexibility.

\paragraph{Perturbation model.}
ASPECT-S uses generated graph-structure and feature perturbations as empirical
probes of channel-wise sensitivity. These perturbations are not exact optimizers
of the theoretical sensitivity term and may not cover all possible distribution
shifts encountered in deployment.

\paragraph{Evaluation scope.}
Our experiments focus on transductive node classification benchmarks. Extending
the analysis to larger-scale graphs, inductive settings, temporal graphs, and
other downstream tasks is left for future work.

\section{Broader impacts}
This work studies self-supervised graph representation learning and is not tied
to a specific deployment domain. Its potential positive impacts include improving
the quality and stability of graph representations, which may benefit downstream
applications such as scientific networks, recommendation systems, and relational
data analysis. However, graph representation learning methods can also be used in
sensitive domains involving social, behavioral, or user-interaction graphs. In
such settings, improved representations may amplify existing biases, enable
undesired profiling, or expose privacy-sensitive relational patterns if deployed
without appropriate safeguards. Moreover, the perturbation evaluation in this work
is limited to controlled structural and feature perturbations and should not be
interpreted as a guarantee of security in real-world deployments. We therefore
recommend that applications in sensitive domains include domain-specific
validation, privacy protection, fairness evaluation, and human oversight.
%%%%%%%%%%%%%%%%%%%%%%%%%%%%%%%%%%%%%%%%%%%%%%%%%%%%%%%%%%%%

% \newpage
% \input{checklist.tex}

\end{document}